\lstdefinestyle{pythonstyle}{
    language=Python,
    basicstyle=\ttfamily\small,
    keywordstyle=\color{blue},
    commentstyle=\color{gray},
    stringstyle=\color{orange},
    numbers=none, % Disable line numbers
    stepnumber=1,
    numbersep=5pt,
    backgroundcolor=\color{white},
    frame=single,
    breaklines=true, % Enable line breaking
    breakatwhitespace=true,
    captionpos=b,
    xleftmargin=0.5em, % Adjust left margin as needed
    xrightmargin=0.5em, % Adjust right margin as needed
    showstringspaces=false, % Prevent underlining spaces in strings
    columns=flexible, % Allow flexible column widths
}
\newcommand{\ourrm}{VideoReward }
\newcommand{\tboxsize}{
  \fontsize{8pt}{10pt}\selectfont
}
\definecolor{metacolor}{HTML}{0064E0}
\theoremstyle{plain}
\newtheorem{theorem}{Theorem}[section]
\newtheorem{lemma}[theorem]{Lemma}
\theoremstyle{definition}
\theoremstyle{remark}
\newcommand*{\affaddr}[1]{#1} 
\newcommand*{\affmark}[1][*]{\textsuperscript{#1}}
\newcommand{\authormark}[2][]{%
  \begingroup
  \def\@thefnmark{#1}%
  \footnote{#2}%
  \endgroup
}
\title{Improving Video Generation with Human Feedback}
\def\eqref#1{equation~\ref{#1}}
\def\1{\bm{1}}
\newcommand{\train}{\mathcal{D}}
\def\loss{\mathcal{L}}
\def\vv{{\bm{v}}}
\def\vx{{\bm{x}}}
\def\vy{{\bm{y}}}
\def\v\epsilon{{\bm{\epsilon}}}
\DeclareMathAlphabet{\mathsfit}{\encodingdefault}{\sfdefault}{m}{sl}
\SetMathAlphabet{\mathsfit}{bold}{\encodingdefault}{\sfdefault}{bx}{n}
\author{%
\bf
Jie Liu\affmark[1,3,5]\textsuperscript{*}~
Gongye Liu\affmark[2,3]\textsuperscript{*}~ 
Jiajun Liang\affmark[3]\textsuperscript{$\dag$}~
Ziyang Yuan\affmark[2,3]~
Xiaokun Liu\affmark[3]~
Mingwu Zheng\affmark[3]~ \\
\vspace{-0.35cm}
\bf
Xiele Wu\affmark[3,4]~
Qiulin Wang\affmark[3]~ 
Menghan Xia\affmark[3]~
Xintao Wang\affmark[3]~
Xiaohong Liu\affmark[4]~ 
Fei Yang\affmark[3]~ \\
\bf
Pengfei Wan\affmark[3]~
Di Zhang\affmark[3]~
Kun Gai\affmark[3]~
Yujiu Yang\affmark[2]\textsuperscript{\Letter}~ 
Wanli Ouyang\affmark[1,5]~ \\
\vspace{0.2cm}
\affaddr{\affmark[1]MMLab, CUHK~~~~~~~~~~}
\affaddr{\affmark[2]Tsinghua University~~~~~~~~~~}
\affaddr{\affmark[3]Kling Team, Kuaishou Technology~~~~~~~~~~} \\
\affaddr{\affmark[4]Shanghai Jiao Tong University~~~~~~~~~~} 
\affaddr{\affmark[5]Shanghai AI Laboratory~~~~~~~~~~} \\
\footnotesize
\vspace{0.15cm}
$^\ast$Equal contribution ~~~~~~$^\dag$Project Leader~~~~~~ \textsuperscript{\Letter} Corresponding author \\
\vspace{0.15cm}
\tt\small{Project page: \url{https://gongyeliu.github.io/videoalign/}}\\
}
\begin{document}

\maketitle

\begin{abstract}
Video generation has achieved significant advances through rectified flow techniques, but issues like unsmooth motion and misalignment between videos and prompts persist.
In this work, we develop a systematic pipeline that harnesses human feedback to mitigate these problems and refine the video generation model.
Specifically, we begin by constructing a large-scale human preference dataset focused on modern video generation models, incorporating pairwise annotations across multi-dimensions.
We then introduce VideoReward, a multi-dimensional video reward model, and examine how annotations and various design choices impact its rewarding efficacy.
From a unified reinforcement learning perspective aimed at maximizing reward with KL regularization, we introduce three alignment algorithms for flow-based models. These include two training-time strategies: direct preference optimization for flow (Flow-DPO) and reward weighted regression for flow (Flow-RWR), and an inference-time technique, Flow-NRG, which applies reward guidance directly to noisy videos.
Experimental results indicate that VideoReward significantly outperforms existing reward models, and Flow-DPO demonstrates superior performance compared to both Flow-RWR and supervised fine-tuning methods. Additionally, Flow-NRG lets users assign custom weights to multiple objectives during inference, meeting personalized video quality needs. 
% Project page: \url{https://videoalign.github.io}.
\end{abstract}
\section{Introduction}
\label{sec:intro}

Advancements in video generation have led to powerful models~\cite{polyak2024movie,kuaishou2024kling,kong2024hunyuanvideo,videoworldsimulators2024} that produce realistic details and coherent motion. Despite this, current systems still face challenges like unstable motion, imperfect text-video alignment and insufficient alignment with human preferences~\cite{zeng2024dawn}. In language model and image generation, reinforcement learning from human feedback (RLHF)~\cite{ouyang2022training,zhou2023beyond,wallace2024diffusion} has proven effective in improving response quality and aligning models with user expectations.

However, applying RLHF to video generation is still remains in its infancy. 
A major obstacle is the lack of a reliable reward signal. Existing preference datasets~\cite{he2024videoscore, wang2024lift, liu2024videodpo, xu2024visionreward} were collected on earlier T2V models that produced short, low-resolution clips. Reward models trained on such data may miss fine spatial detail and long-range dynamics, while over-penalising glitches that current T2V models already suppress. 
In addition, the design space of VLM-based reward models remains under-explored, leading to sub-optimal annotation paradigms, reward hacking issues, and entangled multi-attribute scores. 
The resulting supervision is therefore noisy, biased, and easily exploited during RLHF.

A second challenge arises from the internal mechanisms of cutting-edge video generation models. Many modern systems employ rectified flow~\cite{liu2022flow, lipman2022flow}, predicting velocity rather than noise. Recent studies~\cite{wang2024lift, zhang2024onlinevpo, liu2024videodpo, xu2024visionreward} have tested DPO~\cite{rafailov2024direct,wallace2024diffusion} and RWR~\cite{peng2019advantage,lee2023aligning,furuta2024improving} on diffusion-based video generation approaches. However, adapting existing alignment methods to flow-based models introduces new questions. A recent attempts~\cite{domingo2024adjoint} for flow matching based DPO even degrade quality compared with the unaligned baseline.

\begin{figure*}[!t]
    \centering
    \includegraphics[width=\linewidth]{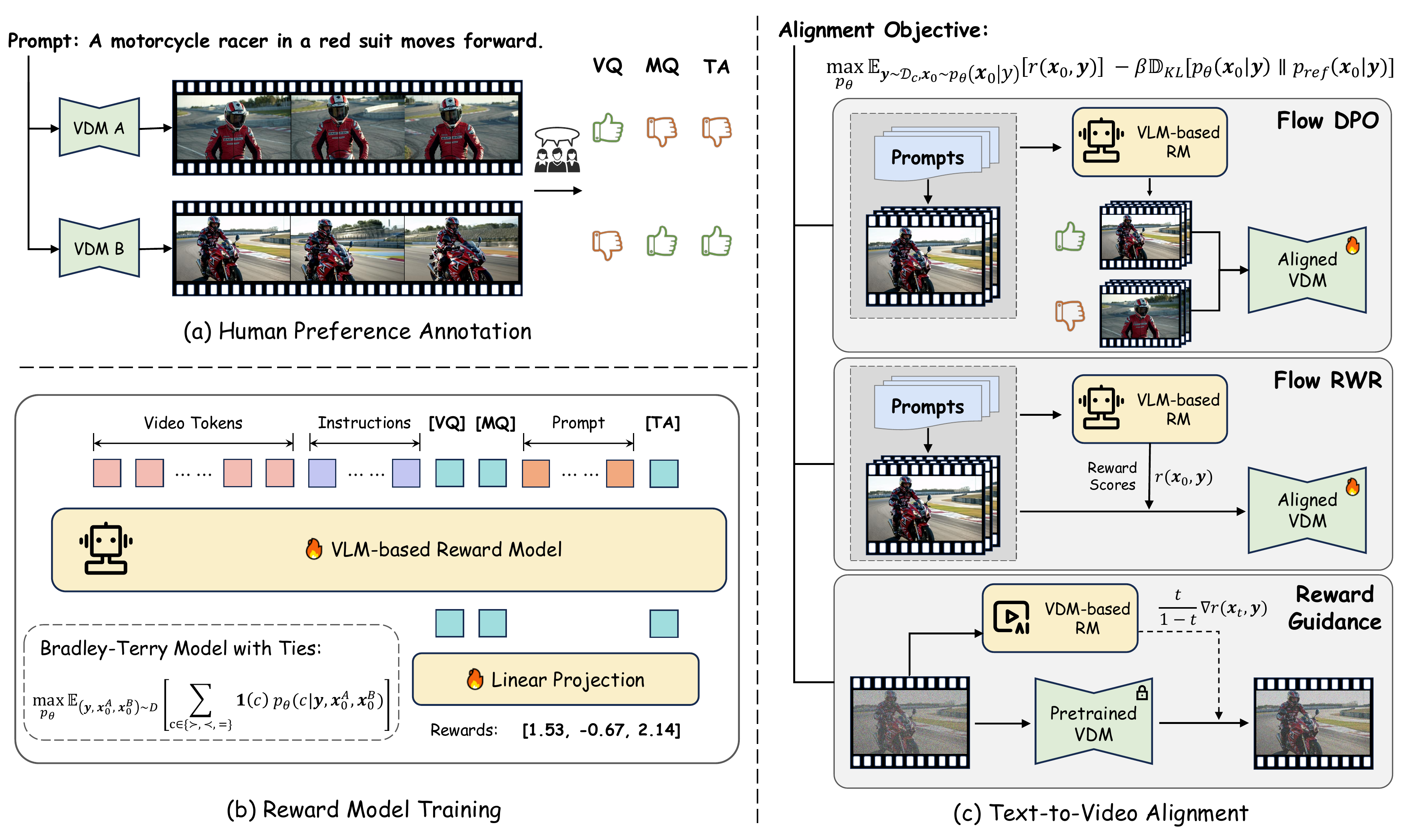}
    \caption{\textbf{Overview of Video Alignment Framework.}
    \textbf{(a) Human Preference Annotation}. We construct 182k prompt-video triplets, each annotated on Visual Quality~(VQ), Motion Quality~(MQ), and Text Alignment~(TA). 
    \textbf{(b) Reward Modeling}. A VLM-based reward model is trained under the Bradley-Terry-Model-with-Ties formulation.
    \textbf{(c) Video Alignment}. From a unified RL perspective, we introduce three alignment algorithms for flow-based video generation: Flow-DPO, Flow-RWR, and Reward Guidance~(Flow-NRG), and provide a systematic comparison.}
    \label{fig:overview}
\end{figure*}

To address these challenges, we present a comprehensive investigation into aligning advanced flow-based video generation models with human preferences, as shown in Fig.~\ref{fig:overview}. 
We first collect 16k high-quality prompts and render them with 12 representative T2V models, producing 182k annotated examples across three key dimensions: Visual Quality (VQ), Motion Quality (MQ), and Text Alignment (TA). We then develop a multi-dimensional video reward model, systematically analyzing how different annotations and design choices affect its performance. These dimensions can be aggregated into a total reward that reflects the overall preferences of humans.

Armed with a reliable reward model, we revisit RLHF algorithms for rectified flow. From a unified reinforcement learning perspective that maximizes reward with KL regularization, we derive two training-time strategies—Flow-DPO and Flow-RWR, and an inference-time technique, Flow-NRG. 
We discover that a simple extension of Diffusion-DPO performs poorly because its timestep-dependent KL term pushes the policy to overfit the objective at higher noise levels, making the model vulnerable to reward hacking. In contrast, our Flow-DPO removes this time-dependent term and keeps the KL weight constant, showing robust performance across all dimensions.

Flow-NRG is an efficient inference-time alignment algorithm that applies reward guidance directly to noisy videos during the denoising process. It allows users to apply arbitrary weightings to multiple alignment objectives during inference, eliminating the need for extensive retraining to meet personalized user requirements.

Our contributions can be summarized as follows:  

\begin{itemize}[left=0mm]
\item \textbf{Large-Scale Preference Dataset}: We create a 182k-sized, multi-dimensional, human-labeled video generation preference dataset from 12 modern video generation models. 

\item \textbf{Multi-Dimensional Reward Modeling}: We propose and systematically study a multi-dimensional video reward model, investigating how different design decisions influence its rewarding efficacy.  

\item \textbf{VideoGen-RewardBench}: We create a benchmark for modern reward models by annotating prompt-video pairs from VideoGen-Eval. This dataset, consisting of diverse prompts and videos generated by modern VDMs, results in 26.5k annotated video pairs with preference labels.

\item \textbf{Flow-Based Alignment}: From a unified RL perspective, we introduce two training-time alignment strategies~(Flow-DPO, Flow-RWR) and one inference-time technique~(Flow-NRG). Experiments show that Flow-DPO with a fixed KL term outperforms other methods. 
And Flow-NRG allows custom weightings of multiple alignment objectives during inference.

\end{itemize}

\section{Related Work}
Reward modeling~\cite{wu2023humanv2, xu2024imagereward, kirstain2023pick, zhang2024learning, liang2024rich} trains CLIP-based models on human preference datasets, while newer approaches use VLMs with regression heads to predict multi-dimensional scores. Learning paradigms include point-wise regression~\cite{he2024videoscore, xu2024visionreward} and pair-wise comparison via Bradley-Terry loss. However, most video reward models focus on short, low-quality videos from the pre-Sora era~\cite{openai2024sora} and lack rigorous evaluation of design choices. We address these limitations by targeting modern video generation and exploring broader reward modeling strategies. For alignment, image generation has adopted RLHF-style techniques such as reward backpropagation~\cite{prabhudesai2023aligning, xu2024imagereward}, RWR~\cite{lee2023aligning}, DPO~\cite{rafailov2024direct, wallace2024diffusion}, PPO~\cite{black2023training, fan2024reinforcement}, and training-free methods~\cite{yeh2024training}. Concurrent efforts~\cite{furuta2024improving, liu2024videodpo, xu2024visionreward, zhang2024onlinevpo, wang2024lift} extend DPO/RWR to diffusion-based video models using old generation models or image-level rewards. We build on this by extending DPO to flow-based video generation, proposing Flow-DPO. More comprehensive discussions of related work can be found in Appendix~\ref{sec:related_works}.
% \input{sections/related_work}
% \vspace{-1.5em}
\vspace{-2mm}
\section{VideoReward}

Robust RLHF begins with a reward model that faithfully mirrors human preferences, yet existing efforts are limited in two key respects: 
(1) \textbf{Data}: existing video-preference datasets were curated for earlier T2V models and mismatch with what users prefer in modern video generation models;
(2) \textbf{Model design}: the key technique choices for VLM-based reward models remain largely uncharted. 
Fig. \ref{fig:overview} (a), (b) summarises our solution. We first build a large-scale preference dataset collected with state-of-the-art T2V models; then perform a systematic study of reward-modeling design.

% \vspace{-2mm}
\subsection{Human Preference Data Collection}
% \vspace{-0.5mm}
\label{sec:train_data}

\begin{table*}[!b]
  \centering
  \vspace{-0.6em}
  \caption{Statistics of the collected training dataset. We utilize 12 T2V models to generate 108k videos from 16k unique prompts, resulting in 182k annotated triplets. Each triplet consists of a prompt paired with two videos, and corresponding preference annotations.}
  \vspace{-0.6em}
\label{tab:dataset_stat}
  \resizebox{\textwidth}{!}{
  \begin{tabular}{ll|ccccc}
    \toprule
    & T2V Model   & Date & \#Videos & \#Anno Triplets & Resolution & Duration \\
    \midrule
    \textit{\textbf{Pre-Sora-Era Models}} & Gen2~\cite{runway2024gen2} & 23.06 & 6k   & 13k  & 768 $\times$ 1408 & 4s \\
    & SVD~\cite{blattmann2023stable}   & 23.11 & 6k   & 13k  & 576 $\times$ 1024 & 4s \\
    & Pika 1.0~\cite{pika2023pika}    & 23.12 & 6k   & 13k  & 720 $\times$ 1280 & 3s \\
    & Vega~\cite{vega}           & 23.12 & 6k   & 13k  & 576 $\times$ 1024 & 4s \\
    & PixVerse v1~\cite{pixverse} & 24.01 & 6k   & 13k  & 768 $\times$ 1408 & 4s \\
    & HiDream~\cite{hidream} & 24.01 & 0.3k & 0.3k & 768 $\times$ 1344 & 5s \\
    
    \midrule 
    \textit{\textbf{Modern Models}}& Dreamina~\cite{dreamina} & 24.03 & 16k  & 68k  & 720 $\times$ 1280 & 6s \\
    & Luma~\cite{luma2024dm}     & 24.06 & 16k  & 57k  & 752 $\times$ 1360 & 5s \\
    & Gen3~\cite{runway2024gen3} & 24.06 & 16k  & 55k  & 768 $\times$ 1280 & 5s \\
    & Kling 1.0~\cite{kuaishou2024kling} & 24.06 & 6k   & 33k  & 384 $\times$ 672  & 5s \\
    & PixVerse v2~\cite{pixverse}  & 24.07 & 16k  & 58k  & 576 $\times$ 1024 & 5s \\
    & Kling 1.5~\cite{kuaishou2024kling} & 24.09 & 7k   & 28k  & 704 $\times$ 1280 & 5s \\
    \bottomrule
  \end{tabular}}
  \vspace{-0.7em}
\end{table*}

Existing human preference datasets for video generation~\cite{liu2024evalcrafter, huang2024vbench, he2024videoscore, wang2024lift, xu2024visionreward} were primarily built on early, low-resolution T2V models that produced short, artifact-laden clips. 
As VDMs continue to evolve, modern T2V models, however, generate longer, higher-fidelity videos with smoother motion. Consequently,  legacy datasets no longer accurately reflect what users prefer today. Reward models trained on such collections may miss fine spatial details and long-range dynamics while over-weighting temporal glitches already mitigated by current models. 
To bridge this gap, we develop a new preference dataset expressly for state-of-the-art VDMs.

\vspace{-3mm}
\paragraph{Prompt Collection and Video Generation.}
We collect diverse prompts from the Internet, categorize them into 8 meta-categories—\emph{animal, architecture, food, people, plants, scenes, vehicles, objects}—and expand them with GPT-4o. 
After removing repetitive, irrelevant, or unsafe entries, we refine them with our in-house prompt rewriter, yielding \textbf{16000 high-quality prompts}. 12 T2V models of varying capabilities then render these prompts into \textbf{108k videos}, which we pair to form \textbf{182k triplets}, each comprising a prompt and two corresponding videos from distinct VDMs. Comprehensive dataset statistics are provided in Tab.~\ref{tab:dataset_stat} and Fig.~\ref{fig:dataset_analysis}.

\vspace{-3mm}
\paragraph{Multi-dimensional Annotation.}
Professional annotators were hired to view each triplet and record pairwise preferences(\emph{A wins~/~Ties~/~B wins}) separately for \emph{Visual Quality~(VQ)}, \emph{Motion Quality~(MQ)}, and \emph{Text Alignment~(TA)}, producing a three-label vector per triplet.  
The same annotators also assign 1–5 Likert scores to each individual video, enabling later studies that compare pointwise and pairwise supervision. 
We reserve \textbf{13 000} triplets whose prompts never appear in training as a validation set. Detailed annotation protocols are provided in Appendix~\ref{sec:appendix_anno_details}.

% \vspace{-1mm}
\subsection{Reward Modeling}
\label{sec:method_rm}
% \vspace{-1mm}

Prior works~\cite{he2024videoscore, wang2024lift, xu2024visionreward} on VLM-based reward models have proved effective for both evaluation~\cite{wu2023humanv2, he2024videoscore} and optimization~\cite{lee2023aligning, wallace2024diffusion, prabhudesai2024video}. However, their core design choices remain insufficiently explored, and current methods still suffer from issues like annotation paradigms, reward hacking and entangled multi-attribute signals. We select the lightweight Qwen2-VL-2B~\cite{wang2024qwen2} as our backbone, and systematically examine three key designs and demonstrate how each yield a cleaner, more reliable reward signal for RLHF.
We also conduct a comprehensive ablation study on the evaluation benchmark in Table~\ref{tab:ablation_rm} of Appendix~\ref{app:sec:extened-exp-results} to further validate the key design choices.

\vspace{-3mm}
\paragraph{Score Regression \textit{v.s.} Bradley-Terry.} \label{sec:method_rm_type}

We first investigate two reward learning paradigms: the \textit{Bradley-Terry} (BT) model~\cite{bradley1952rank} and pointwise score regression. The BT model formulates preference learning as a probabilistic ranking task. Given a prompt \(\vy\) and paired videos \((\vx^w_0, \vx^l_0)\), it optimizes
\(
    \mathcal{L}_{BT} = -\mathbb{E}\left[ \log\left(\sigma\left(r(\vx^w_0, \vy) - r(\vx^l_0, \vy)\right)\right) \right]
\), where the expectation is taken over \({(\vy, \vx^w_0, \vx^l_0) \sim \mathcal{D}}\).

In contrast, score regression directly predicts a scalar quality score \(z \in \mathbb{R}\) using the MSE loss:
\(
    \mathcal{L}_{reg} = \mathbb{E}\left[ \| r(\vx_0, \vy) - z \|^2 \right],
    % % \vspace{-0.6em}
\) where the expectation is taken over \({(\vy, \vx_0, z) \sim \mathcal{D}}\).

Since our training dataset includes both pointwise scores and pairwise preferences from the same annotators, we can directly compare between the two annotation paradigms. We train both types of reward models on increasing subsets of the training set and report the best validation accuracy averaged over VQ, MQ, and TA. 
% Tie cases are omitted for BT because it assumes strict order.
Fig.~\ref{fig:comp_bt_reg} presents these results.

\begin{figure}[t]
\vspace{-3mm}
  \centering
  % --------- 左图 ----------
  \begin{minipage}[t]{0.48\linewidth}
    \centering
    \includegraphics[width=\linewidth]{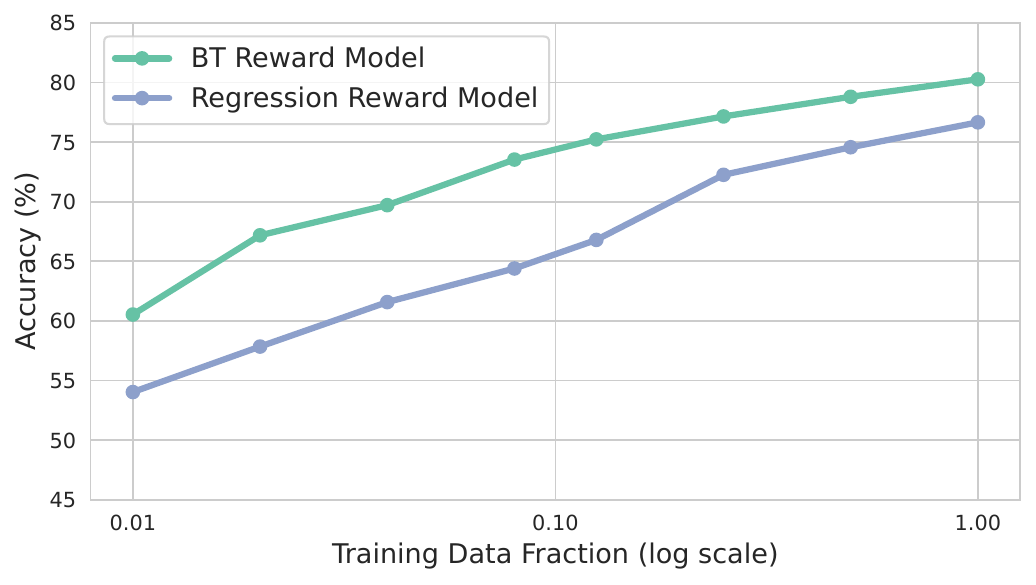}
    \caption{\textbf{BT vs.\ Regression.} Accuracy curves across log-scaled data fractions.}
    \label{fig:comp_bt_reg}
  \end{minipage}\hfill
  % --------- 右图 ----------
  \begin{minipage}[t]{0.48\linewidth}
    \centering
    \includegraphics[width=\linewidth]{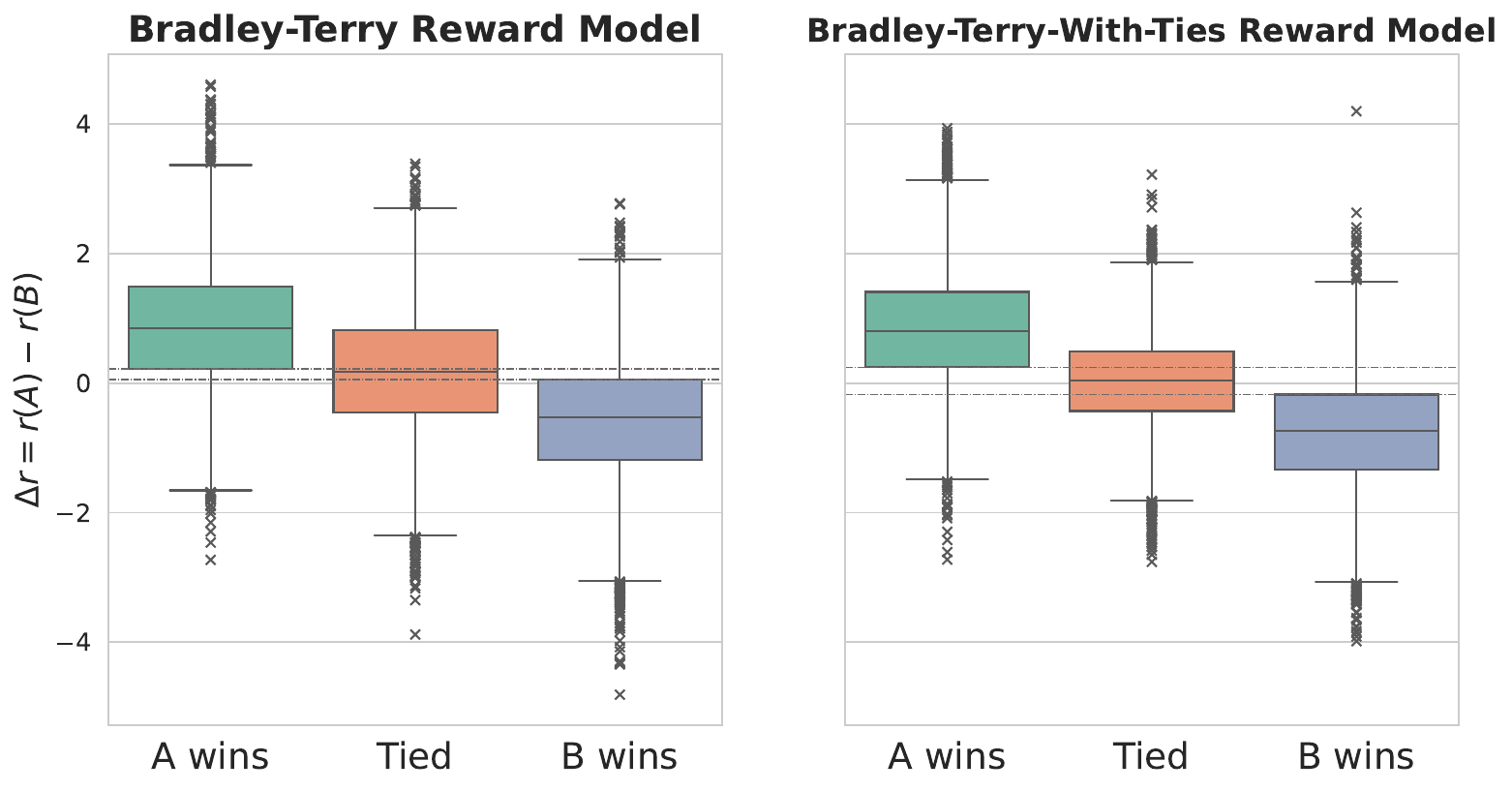}
    \caption{\textbf{BT vs.\ BTT.} Score-difference distributions
    ($\Delta r$) for BT (left) and BTT (right). 
    }
    \label{fig:comp_bt_btt}
    %BT-Tie cleanly separates ties from clear wins and losses, enabling finer-grained feedback.  
  \end{minipage}
  \vspace{-3mm}
\end{figure}

As the dataset grows, both the BT and regression-style models improve in accuracy, while the BT model remains consistently superior. This advantage stems from the nature of pairwise annotations, which capture subtle relative distinctions more effectively. Even when two videos receive identical pointwise scores, annotators can still differentiate subtle quality differences. 

\vspace{-2mm}
\paragraph{Ties Matters.}  \label{sec:method_ties}

While vanilla BT model is widely used to capture human preferences from chosen-rejected pairs, the importance of tie annotations is often overlooked. Inspired by recent work~\cite{liu2024reward}, we adopt \textit{Bradley-Terry model with ties} (BTT)~\cite{rao1967ties}, an extension that accounts for tied preferences. Formally, BTT defines a tripartite preference distribution:

\begin{equation}
P_{\theta}(c| \vy, \vx_0^A, \vx_0^B)=
\begin{cases}
\displaystyle
\frac{(\theta^{2}-1)\,\exp({r_A})\exp({r_B})}
     {(\exp({r_A})+\theta \exp({r_B}))\,(\theta \exp({r_A})+\exp({r_B}))}
& \text{Tie }(x^{A}_{0}=x^{B}_{0}),\\[8pt]
\displaystyle
\frac{\exp({r_A})}
     {\exp({r_B})+\theta \exp({r_A})}
& \text{A preferred }(x^{A}_{0}\succ x^{B}_{0}),\\[8pt]
\displaystyle
\frac{\exp({r_B})}
     {\theta \exp({r_A})+\exp({r_B})}
& \text{B preferred }(x^{B}_{0}\succ x^{A}_{0}).
\end{cases}
\end{equation}

where $c$ denotes preference choice, $\theta > 1$ controls the tendency toward ties, with a larger $\theta$ increasing tie probability. We empirically set $\theta=5.0$ and train the BTT model by minimizing negative log-likelihood:
\begin{equation}
\begin{aligned}
\label{eq:btt_loss}
\mathcal{L}_{\text{BTT}} = -\mathbb{E}_{(\vy, \vx_0^A, \vx_0^B) \sim \mathcal{D}} \Big[
    \sum_{c \in \{\succ, \prec, =\}} \mathbbm{1}(c) \log P_{\theta}(c| \vy, \vx_0^A, \vx_0^B)
\Big]
\end{aligned}
\end{equation}
We train BT and BTT reward models under identical settings and visualize $\Delta r = r(\vx_0^A, \vy) - r(\vx_0^B, \vy)$ on the validation set (Fig.~\ref{fig:comp_bt_btt}). Although the BT model handles chosen/rejected pairs well, it struggles to handle ties—often assigning sizeable \(\Delta r\) to many tie pairs, conflating them with clear preferences.  By contrast, BTT learns a flexible decision boundary that clusters ties near zero while preserving large margins for decisive wins and losses, yielding more reliable feedback for downstream RLHF.

\vspace{-2mm}
\paragraph{Token Positioning.} \label{sec:method_separate_token}

A common approach in LLM / MLLM-based reward modeling~\cite{ouyang2022training, touvron2023llama2, he2024videoscore} attaches a linear projection head to the final token to predict multi-dimensional scores. This method forcing prompt-independent and prompt-dependent cues into one vector, causing \emph{context leakage}: the same video can receive different visual-quality scores when paired with different prompts.
We eliminate this entanglement with a simple token-positioning strategy.
As shown in Fig.\ref{fig:overview}(b), two context-agnostic tokens, \texttt{[VQ]}, \texttt{[MQ]}, are inserted immediately after the video and before the prompt, so they can attend only to visual content. A context-aware token, \texttt{[TA]}, is placed after the full prompt, allowing it to attend to both the video and the text. 
The final-layer embeddings of these tokens are then mapped to dimension-specific scores via a shared linear layer. This design removes context leakage, stabilizes visual and motion assessments, and maintains parameter efficiency. The full input template is provided in Appendix~\ref{sec:appendix_input_template}.

\vspace{-1.5mm}
\section{Video Alignment}
% \vspace{-1mm}
\label{sec:video_align}

With a high-fidelity reward model established, we next introduce three alignment methods for flow-based generation models under a unified RL objective: two training-time algorithms—\textbf{Flow-DPO} and \textbf{Flow-RWR}, and one inference-time guidance technique, \textbf{Flow-NRG} (Fig.~\ref{fig:overview} (c)).

\subsection{Preliminaries}
\paragraph{Rectified Flow.} Let \(\vx_0 \sim q(\vx_0)\) denote a data sample from the real data distribution, and \(\vx_1 \sim p(\vx_1)\) denote a noise sample, where \(\vx_0,\vx_1 \in \mathbb{R}^d\). Recent advanced image~\cite{esser2024scaling} and video generation models~\cite{polyak2024movie,kong2024hunyuanvideo} adopt the Rectified Flow~\cite{liu2022flow}, which defines the “noisy” data \(\vx_t\) as
\(
    \vx_t = (1-t)\,\vx_0 \;+\; t\,\vx_1,
\)
where \(t \in [0,1]\). Then we can train a transformer model to regress the velocity field \(\vv_\theta(\vx_t, t)\) by minimizing the Flow Matching objective \cite{lipman2022flow,liu2022flow}:
\[
    \loss(\theta) = \mathbb{E}_{t,\;\vx_0\sim q(\vx_0),\;\vx_1 \sim p(\vx_1)} 
    \bigl[\;\|\vv \;-\; \vv_\theta(\vx_t,t)\|^2\bigr],
\]
where the target velocity field is \(\vv = \vx_1 - \vx_0\).

\vspace{-2mm}
\paragraph{RLHF.} The goal of RLHF is to learn a conditional distribution \(p_\theta(\vx_0 \mid \vy)\) that maximizes the reward \(r( \vx_0,\vy)\) while controls the KL-divergence from the reference model \(p_\text{ref}\) via a coefficient \(\beta\):
% \vspace{-0.3em}
\begin{equation}
    \max_{p_\theta} \  \mathbb{E}_{\vy \sim \mathcal{D}_c, \vx_0 \sim p_\theta(\vx_0 \mid \vy)} \left[ r(\vx_0,\vy) \right] - \beta \, \mathbb{D}_{\text{KL}} \left[ p_\theta(\vx_0 \mid \vy) \,\|\, p_{\text{ref}}(\vx_0 \mid \vy) \right].
\label{eq:rlhf}
\end{equation}
% \vspace{-2em}

% where \(p_\text{ref}\) corresponds to \(p_{\text{ref},0}\) and \(p_\theta\) to \(p_{\theta,0}\). For simplicity, the timestep subscripts are omitted. 

\vspace{-1.5mm}
\subsection{Flow-DPO}
% \vspace{-1mm}
Consider a training set \(\train = \{\vy, \vx_0^w, \vx_0^l\}\), where \(y\) is the prompt and human annotations prefer generated video \(\vx_0^w\) to \(\vx_0^l\) (i.e., \(\vx_0^w \succ \vx_0^l\)).
Direct Preference Optimization (DPO)~\cite{rafailov2024direct} aligns models with human preferences by analytically solving the RLHF objective (Eq.~\ref{eq:rlhf}) and optimizing the policy via supervised learning.
Extending this idea to diffusion models, Diffusion-DPO~\cite{wallace2024diffusion} derives a DPO-style loss under the diffusion paradigm. The resulting objective \(\loss_\text{DD}(\theta)\) is formulated as:

{
\vspace{-1.5em}
\footnotesize
\begin{equation}
- \mathbb{E} \Bigg[ \log \sigma \Bigg(-\frac{\beta}{2} \Big( 
    \| \v\epsilon^w - \v\epsilon_\theta(\vx_{t}^w, t)\|^2 
    - \|\v\epsilon^w - \v\epsilon_\text{ref}(\vx_{t}^w, t)\|^2 - \bigl(\|\v\epsilon^l - \v\epsilon_\theta(\vx_{t}^l, t)\|^2 - \|\v\epsilon^l - \v\epsilon_\text{ref}(\vx_{t}^l, t)\|^2 \bigr) \Big) \Bigg) \Bigg]
\label{eq:diffusion_dpo}
\end{equation}
\vspace{-1.0em}
}

The expectation is taken over samples \(\{\vx_0^w, \vx_0^l\} \sim \train\) and the noise schedule \(t\).
In Rectified Flow, we relate the noise vector \(\v\epsilon^*\) to a velocity field \(\vv^*\). Specifically, Lemma~\ref{lemma:x1_with_v} in Appendix~\ref{app:sec:math} shows that
\begin{align}\label{eq:flow_se}
    \| \v\epsilon^* - \v\epsilon_\text{pred}(\vx_{t}^*, t)\|^2 = (1-t)^2 \|\vv^* - \vv_\text{pred}(x_t^*, t)\|^2,
\end{align}
% where \(\v\epsilon_\text{pred}\) and \(\vv_\text{pred}\) refer to predictions either from the model \(p_\theta\) or the reference model \(p_\text{ref}\). 
By substituting Eq.~\ref{eq:flow_se} into Eq.~\ref{eq:diffusion_dpo}, 
we obtain the final Flow-DPO loss \(\loss_\text{FD}(\theta)\):
% {
% % \vspace{-1.5em}
% \scriptsize
% \begin{empheq}[box=\fbox]{equation}\label{eq:flow_dpo}
% - \mathbb{E} \Bigg[\log \sigma \Bigg( -\frac{\beta_t}{2} \Big( \| \vv^w - \vv_\theta(\vx_{t}^w, t)\|^2  - \|\vv^w - \vv_\text{ref}(\vx_{t}^w, t)\|^2 - \bigl(\|\vv^l - \vv_\theta(\vx_{t}^l, t)\|^2  - \|\vv^l - \vv_\text{ref}(\vx_{t}^l, t)\|^2 \bigr) \Big) \Bigg) \Bigg],
% \end{empheq}
% % \vspace{-1.0em}
% }
{
% \vspace{-1.5em}
\begin{empheq}[box=\fbox]{align}\label{eq:flow_dpo}
- \mathbb{E} \Bigg[ \log \sigma \Bigg(& -\frac{\beta_t}{2} \Big( 
    \| \vv^w - \vv_\theta(\vx_{t}^w, t)\|^2 
    - \|\vv^w - \vv_\text{ref}(\vx_{t}^w, t)\|^2 \notag \\
    &\quad - \bigl(\|\vv^l - \vv_\theta(\vx_{t}^l, t)\|^2 
    - \|\vv^l - \vv_\text{ref}(\vx_{t}^l, t)\|^2 \bigr) \Big) \Bigg) \Bigg]
\end{empheq}
}
where \(\beta_t = \beta\,(1-t)^2\)
% and the expectation is over samples \(\{\vx_0^w, \vx_0^l\} \sim \train\) and the noise schedule \(t\)
. Intuitively, minimizing \(\loss_\text{FD}(\theta)\) guides the predicted velocity field \(\vv_\theta\) closer to the target velocity \(\vv^w\) of the “preferred” data, while pushing it away from \(\vv^l\) (the “less preferred” data). The strength of this preference signal depends on the differences between the predicted errors and the corresponding reference errors. We provide Flow-DPO pseudo-code in Appendix~\ref{app:sec:pesudocode}.
\vspace{-2mm}
\paragraph{Discussion on \(\beta_t\).} \label{sec:discussion_beta}
The KL coefficient \(\beta_t\) controls how far the learned policy is allowed to deviate from the reference model~\cite{rafailov2024direct,wallace2024diffusion}. A direct derivation yields the schedule \(\beta_t = \beta (1 - t)^2\).
The penalty $\beta_t$ vanishes as \(t\) approaches 1 and reaches \(\beta\) at \(t=0\).
\emph{This scheduling strategy causes the model to prioritize alignment at higher noise levels.}
Unlike in diffusion models, our experiments reveal that this schedule degrades alignment performance in rectified flow, leading to reward hacking and visual artifacts.
Inspired by a similar observation in DDPM's~\cite{ho2020denoising} training objective, where discarding the weighting in denoising score matching improves sample quality, we instead adopt a constant $\beta$. This adjustment leads to more stable training and improved alignment across all reward dimensions. We provide a more detailed discussion of this in Section~\ref{sec:ablation_beta}.

\subsection{Flow-RWR}
% \vspace{-1mm}
Drawing inspiration from the application of Reward-weighted Regression (RWR)~\cite{peters2007reinforcement} in diffusion models~\cite{lee2023aligning,furuta2024improving}, we propose a counterpart for flow-based models based on expectation-maximization~\cite{dempster1977maximum}. Starting from the general KL-regularized reward-maximization problem in Eq.~\ref{eq:rlhf}, prior work~\cite{rafailov2024direct} shows that its optimal closed-form solution can be written as:  
\begin{equation}\label{eq:rlhf_solution}
    p_\theta(\vx_0 \mid \vy) = \frac{1}{Z(\vy)}\,p_\text{ref}(\vx_0 \mid \vy)\,
    \exp\biggl(\frac{1}{\beta}\,r(\vx_0, \vy)\biggr),
\end{equation} 
where \(Z(\vy)=\sum_{\vx_0} p_\text{ref}(\vx_0 \mid \vy) \exp\bigl(\tfrac{1}{\beta}r(\vx_0, \vy)\bigr)\) is the partition function. Following~\cite{furuta2024improving}, we can obtains the RWR loss:
\begin{equation} \label{eq:rwr_epsilon}
    \loss_\text{RWR}(\theta) = \mathbb{E}_{\vy,\vx_0,\v\epsilon,t} \bigl[\exp(r(\vx_0, \vy))\|\v\epsilon - \v\epsilon_\theta(\vx_{t}, t, \vy)\|^2\bigr].
\end{equation}

For rectified-flow models, we formulate this as a reward-weighted velocity regression:
\begin{empheq}[box=\fbox]{equation}\label{eq:flow_rwr}
\loss_\text{RWR}(\theta) = \mathbb{E} \bigl[\exp(r(\vx_0, \vy))\|\vv - \vv_\theta(\vx_{t}, t, \vy)\|^2\bigr],
\end{empheq}
% where \(\vv\) is the ground-truth velocity, and \(\vv_\theta(\vx_t, t, \vy)\) represents the predicted velocity under parameters \(\theta\). 
As in Flow-DPO, we omit the \((1-t)^2\) factor for better performance.

% \vspace{-1mm}
\subsection{Noisy Reward Guidance}
% \vspace{-1mm}
% % \vspace{-2mm}
Recall that the KL-regularized RL objective (Eq.~\ref{eq:rlhf}) admits a closed-form solution(Eq.~\ref{eq:rlhf_solution}), which transform the original distribution \(p_\text{ref}(\vx_0 \mid \vy)\) into the new target distribution \(p_\theta(\vx_0 \mid \vy)\). Since the constants \(\beta\) and \(w\) can be absorbed into \(r(\vx_0, \vy)\), the closed-form solution becomes:
\begin{equation}\label{eq:reward_guidance_newp}
     p_\theta(\vx_0 \mid \vy) \;\propto\; p_\text{ref}(\vx_0 \mid \vy)\,[\exp(r(\vx_0, \vy))]^w,
\end{equation}
where \(w \in \mathbb{R}\) controls the strength of the reward guidance. For rectified flow, as we proved in Appendix~\ref{lemma:reward_guidance}, this reweighting can be achieved by shifting the velocity field:
\begin{empheq}[box=\fbox]{equation}\label{eq:reward_guidance}
    \tilde{\vv}_t(\vx_t \mid \vy) 
= \vv_t(\vx_t \mid \vy) 
- w \,\frac{t}{1 - t}\,\nabla r(\vx_t, \vy),
% \vspace{-2mm}
\end{empheq}
This modification of the marginal velocity field alters the sampling distribution to match the target form in Eq.~\eqref{eq:reward_guidance_newp}. Since this formulation is structurally similar to classifier guidance~\cite{dhariwal2021diffusion, song2020score}, we refer to it as \emph{reward guidance}. Pseudo-code is provided in Appendix~\ref{app:sec:pesudocode}.

\paragraph{Efficient Reward on Noisy Latents.} 
Computing \(\nabla r\) in pixel space requires back-propagating through the full VAE decoder, which is computationally expensive. To address this, we propose training a lightweight, \textbf{time-dependent reward model} \(r_{\theta}(\cdot, t)\) directly in latent space. For each preference pair \((\vx^w, \vx^l)\), we apply identical noise to both videos, assuming their relative preference remains unchanged. We then adopt the Bradley–Terry loss to learn the reward function from these noised videos. Leveraging the fact that modern VDMs are already well trained on noisy latents, we can reuse a few early layers from the pretrained backbone to construct the reward model, avoiding the need for heavy retraining.
We apply Eq.~\eqref{eq:reward_guidance} at each inference step (except at \(t=1\)), enabling efficient inference-time alignment in latent space.

\vspace{-2mm}
\section{Experiments}\label{sec:exp}
\vspace{-1mm}
% In this section, we empirically evaluate our reward model and alignment algorithms on T2V tasks.

\begin{table*}[t]
\vspace{-2mm}
\centering
\caption{Preference accuracy on GenAI-Bench and VideoGen-RewardBench. For ties-excluded accuracy, we calculate accuracy using only the data labeled as ``A wins" or ``B wins".
For ties-included accuracy, we use the algorithm from \citet{deutsch2023ties}, which tests various tie thresholds and selects the one that maximizes three-class accuracy. \textbf{Bold}: best performance.}
\label{tab:main_result_rm}

\resizebox{\linewidth}{!}{
  \begin{tabular}{cccccccccccc} % {@{}lc@{}}
    \toprule
     \multirow{3}{*}{Method} & \multicolumn{2}{c}{\textbf{GenAI-Bench}} & \multicolumn{8}{c}{\textbf{VideoGen-RewardBench}} \\
    \cmidrule(lr){2-3}\cmidrule(lr){4-11}
     & \multicolumn{2}{c}{Overall Accuracy} & \multicolumn{2}{c}{Overall Accuracy} & \multicolumn{2}{c}{VQ Accuracy} & \multicolumn{2}{c}{MQ Accuracy} & \multicolumn{2}{c}{TA Accuracy}   \\
    \cmidrule(lr){2-3}\cmidrule(lr){4-5}\cmidrule(lr){6-7}\cmidrule(lr){8-9}\cmidrule(lr){10-11}
     & w/ Ties & w/o Ties & w/ Ties & w/o Ties & w/ Ties & w/o Ties & w/ Ties & w/o Ties & w/ Ties & w/o Ties   \\
    \midrule
    Random   & 33.67 & 49.84 & 41.86 & 50.30 & 47.42 & 49.86 & 59.07 & 49.64 & 37.25 & 50.40 \\
    % \midrule
    VideoScore~\cite{he2024videoscore} & 49.03 & 71.69 & 41.80 & 50.22 & 47.41 & 47.72 & 59.05 & 51.09 & 37.24  & 50.34 \\
    LiFT~\cite{wang2024lift} & 37.06 & 58.39 & 39.08 & 57.26 & 47.53 & 55.97 & 59.04 & 54.91 & 33.79 & 55.43 \\
    VisionRewrd~\cite{xu2024visionreward} & \textbf{51.56} & 72.41 & 56.77 & 67.59 & 47.43 & 59.03 & 59.03 & 60.98 & 46.56 & 61.15 \\
    \rowcolor[gray]{0.9}
    Ours  & 49.41 & \textbf{72.89} & \textbf{61.26} & \textbf{73.59} & \textbf{59.68} & \textbf{75.66} & \textbf{66.03} & \textbf{74.70} & \textbf{53.80} & \textbf{72.20} \\
    \bottomrule
  \end{tabular}
}

\end{table*}

\subsection{Reward Learning}

\vspace{-2mm}
\paragraph{Training Setting.}
We use Qwen2-VL-2B~\cite{wang2024qwen2} as the backbone of our reward model, trained with BTT loss.
Several observations were made during training. First, higher video resolution and more frames generally improved the reward model's performance. Second, using a stable sampling interval instead of a fixed frame number significantly enhanced motion quality evaluations, especially for videos of varying lengths.
In practice, we sample videos at 2 fps, with a resolution of approximately $448\times 448$ pixels while preserving the original asoect ratio. Hyperparameters are in Appendix~\ref{app:sec:hyperparameters}.

\vspace{-2mm}
\paragraph{Evaluation.}
We evaluate our reward model on two benchmarks targeting different generations of T2V models:
\textbf{(1) VideoGen-RewardBench}: Built upon the third-party prompt-video dataset VideoGen-Eval~\cite{zeng2024dawn}, this benchmark targets \textit{modern T2V models}. We address the lack of human annotations in VideoGen-Eval by manually constructing 26.5k triplets and hiring annotators to provide pairwise preference labels. Annotators also assess overall video quality, serving as a universal label across all dimensions.
\textbf{(2) GenAI-Bench}~\cite{jiang2024genai}: GenAI-Bench features short (2-seconds) videos generated by \textit{pre-Sora-era T2V models}, enabling evaluation on earlier-generation outputs. A detailed comparison between the two benchmarks is provided in Appendix~\ref{sec:appendix_eval_benchmarks}.
We evaluate our reward model against existing baselines, including VideoScore~\cite{he2024videoscore}, as well as two concurrent works: LiFT~\cite{wang2024lift} and VisionReward~\cite{xu2024visionreward}. Consistent with practices in LLM evaluation~\cite{lambert2024rewardbench}, we use pairwise accuracy, reporting both ties-included~\cite{deutsch2023ties} and ties-excluded accuracy. We calculate overall accuracy on GenAI-Bench and dimension-specific (VQ, MQ, TA) accuracy on VideoGen-RewardBench. Additional evaluation details are in Appendix~\ref{sec:appendix_comp_methods} and Appendix~\ref{app:sec:eval_metrics}

\vspace{-2mm}
\paragraph{Main Results.}
Tab.~\ref{tab:main_result_rm} presents the pairwise accuracy  across both benchmarks.
\textbf{VideoScore} 
% is primarily designed to evaluate videos from earlier-generation models. It 
performs well on GenAI-Bench but fails on VideoGen-RewardBench, indicating poor generalization to modern T2V models.
\textbf{LiFT} improves over VideoScore on modern videos but remains below 60\% accuracy, showing limited pairwise discrimination ability.
\textbf{VisionReward} demonstrates competitive performance on GenAI-Bench but underperforms on VideoGen-RewardBench, especially on visual and motion dimensions under ties-included settings. This drop stems from its difficulty in assessing the improved fidelity and motion smoothness of modern outputs.
In contrast, our method \textbf{\ourrm} outperforms all other models on VideoGen-RewardBench, showcasing its strong alignment with human preferences on modern T2V generations. Moreover, despite being trained on a disjoint dataset~(see Fig.~\ref{fig:model_coverage}), it still achieves comparable performance on GenAI-Bench, indicating robust generalization across different eras of T2V models.
\textbf{Ablation studies are provided in Appendix~\ref{app:sec:extened-exp-results}.}

\vspace{-2mm}
\subsection{Video Alignment}
\vspace{-1mm}
\paragraph{Training Setting.}
Our pretrained model \(p_{\text{ref}}\) is an internal, research-purpose video generation model based on Transformer architecture~\cite{peebles2023scalable}, which is trained using rectified flow (see Appendix~\ref{fig:arch_vdm} for details). Following SD3~\cite{esser2024scaling}, all alignment experiments fine-tune the Transformer using LoRA~\cite{hu2021lora}. For training-based alignment methods, including SFT, Flow-DPO, and Flow-RWR, we adopt \ourrm to provide the reward signals. For reward guidance, we employ the \textit{latent} reward model to generate rewards.
For supervised fine-tuning~(\textbf{SFT}), we utilize only the “chosen data”. 
Following \citet{rafailov2024direct}, we employ \ourrm as the ground-truth reward model to simulate human feedback and relabel our training dataset, ensuring that models optimised on these synthetic labels can be evaluated fairly by the same reward function. 
Hyperparameter settings are provided in the Appendix~\ref{app:sec:hyperparameters}.

\begin{table*}[!t]
\small
\centering
\caption{Multi-dimensional alignment with VQ:MQ:TA = 1:1:1. \textbf{Bold}: Best performance. Although Flow-DPO with a timestep-dependent \(\beta\) achieves high VQ and MQ reward win rates, it exhibits significant reward hacking. In contrast, Flow-DPO with a constant \(\beta\) achieves high VQ, MQ, and TA scores while avoiding reward hacking.
}
\label{tab:multi_dimension_alignment}
\resizebox{\linewidth}{!}{
\begin{tabular}{ccccccccccccc}
\toprule
\multirow{2}{*}{Method} & \multicolumn{6}{c}{\textbf{VBench}} & \multicolumn{3}{c}{\textbf{VideoGen-Eval}} & \multicolumn{3}{c}{\textbf{TA-Hard}} \\ \cmidrule(l){2-7}\cmidrule(l){8-10} \cmidrule(l){11-13}  
           & Total & Quality & Sementic & VQ   & MQ   & TA   & VQ   & MQ   & TA   & VQ   & MQ   & TA   \\ \midrule
Pretrained                                    & 83.19 & \textbf{84.37}   & 78.46    & 50.0  & 50.0                        & 50.0  & 50.0                        & 50.0  & 50.0  & 50.0    & 50.0    & 50.0    \\
SFT        & 82.31 & 83.13   & 79.04    & 51.28 & 65.21 & 52.84 & 61.27 & 76.13 & 46.35 & 57.75 & 76.06 & 57.75 \\
Flow-RWR        & 82.27 & 83.19   & 78.59    & 51.55 & 63.9  & 53.43 & 59.05 & 69.7  & 48.35 & 61.97 & 78.87 & 55.71 \\
Flow-DPO (\(\beta_t = \beta (1 - t)^2\))     & 80.90  & 81.52   & 78.42    & 87.78 & \textbf{82.36} & 51.02 & 88.44 & \textbf{91.23} & 28.14 & \textbf{84.29} & \textbf{83.10}  & 38.03 \\
\rowcolor[gray]{0.9}
Flow-DPO & \textbf{83.41} & 84.19 & \textbf{80.26} & \textbf{93.42} & 69.08 & \textbf{75.43} & \textbf{90.95}     & 81.01     & \textbf{68.26}     & 77.46   & 71.43   & \textbf{73.24}   \\ \bottomrule
\end{tabular}
}
% \vspace{-2mm}
\end{table*}

\begin{table}[t]
\centering
\caption{Single-dimensional alignment with TA. \textbf{Bold}: Best performance. Flow-DPO with a constant \(\beta\) is the most effective method, achieving best performance without reward hacking.}
\label{tab:single_dimensional_alignment}
\scriptsize
% \resizebox{\linewidth}{!}{
\begin{tabular}{ccccccc}
\toprule
\multirow{2}{*}{Method} & \multicolumn{4}{c}{\textbf{VBench}} & \textbf{VideoGen-Eval} & \textbf{TA-Hard} \\ \cmidrule(l){2-5}\cmidrule(l){6-6}\cmidrule(l){7-7} 
           & Total & Quality & Semantic & TA & TA & TA \\ \midrule
Pretrained & 83.19 & \textbf{84.37}        & 78.46         & 50.00   &  50.00  & 50.00   \\
SFT        & 82.71 & 83.48        & 79.62         & 52.88   & 53.81   & 64.79   \\
Flow-RWR        & 82.40 & 83.36        & 78.58         & 59.66    & 49.50   & 66.20   \\
Flow-DPO (\(\beta_t = \beta (1 - t)^2\)) & 82.35 & 83.00        &  79.75        & 63.67    & 55.95   & 71.83   \\
\rowcolor[gray]{0.9}
Flow-DPO        & \textbf{83.38} & 84.28        & \textbf{79.80}         & \textbf{69.09}   & \textbf{65.49}   & \textbf{84.51}   \\ \bottomrule
\end{tabular}
% }
% \vspace{-3mm}
\end{table}

\begin{figure*}[!t]
    \centering
    \includegraphics[width=1\linewidth]{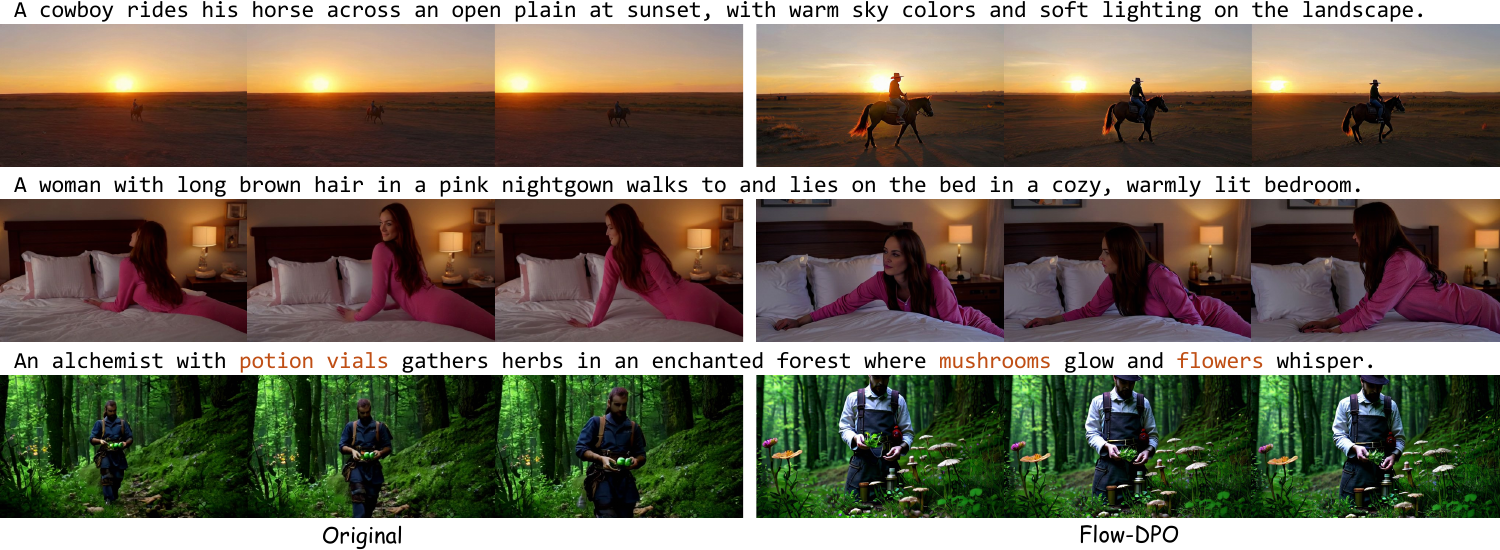}
    \vspace{-2mm}
    \caption{Comparison of videos generated by the original model and the Flow-DPO aligned model.}
    \label{fig:result}
    \vspace{-2mm}
\end{figure*}

\paragraph{Evaluation.}
We evaluate performance using both automatic and human assessments. For automatic evaluation, we measure the \textit{win rate} (via VideoReward) and the \textit{Vbench} score. \textbf{The win rate} is the proportion of cases where \ourrm assigns a higher reward to the aligned model than to the pretrained model. \textbf{Vbench} is a fine-grained T2V benchmark that assesses Quality and Semantic alignment. For human evaluation, each sample is reviewed by two annotators, with a third annotator resolving disagreements. Identical random seeds are used across methods for strict comparability.
Our evaluation uses prompts from \textit{Vbench}, \textit{VideoGen-Eval}, and a new \textit{TA-Hard} set that stresses complex semantics, since we notice that the Vbench and VideoGen-Eval prompts are relatively straightforward in terms of text alignment.
Appendix~\ref{sec:app:ta_hard_prompt} lists a subset of TA-Hard prompts.

\begin{figure}[t]
  \centering
  \begin{minipage}[t]{0.52\linewidth}
    \centering
    \includegraphics[width=\linewidth]{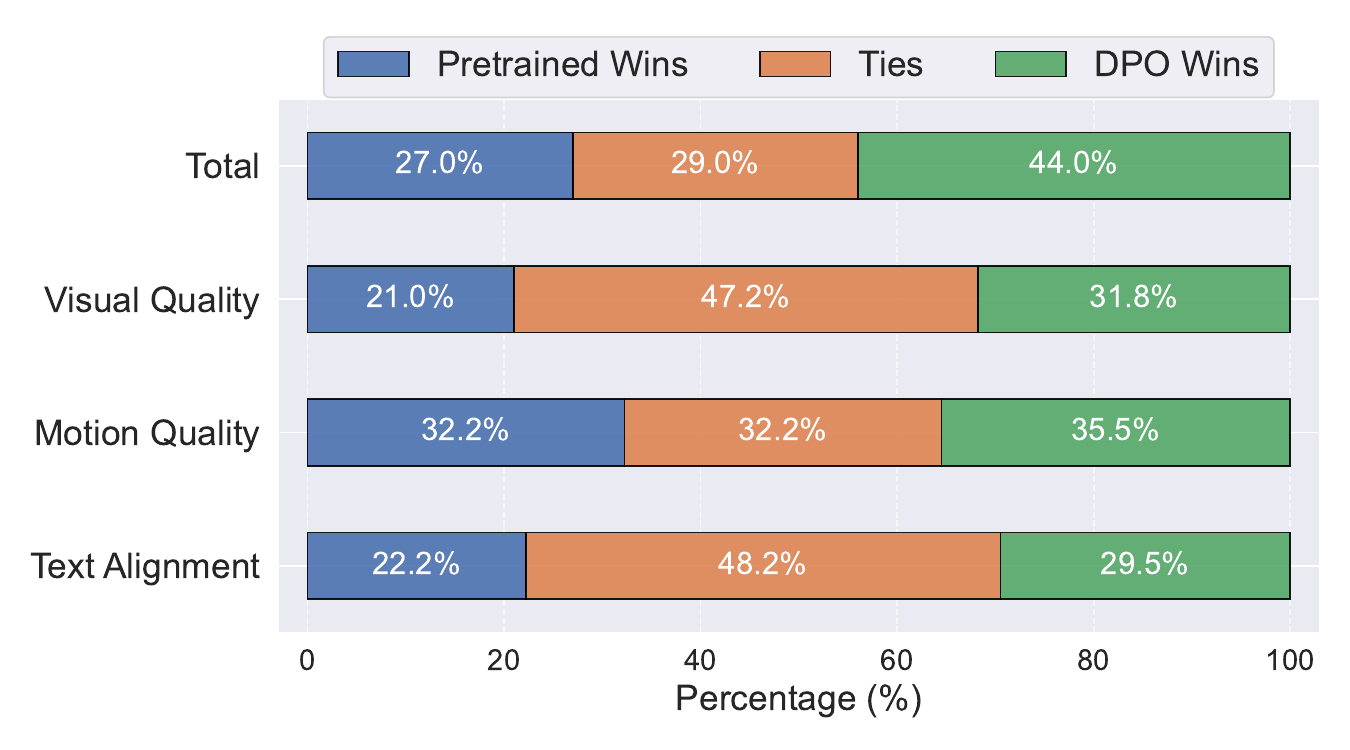}
    \caption{Human evaluation of Flow-DPO aligned model \textit{vs.} pretrained model on VideoGen-Eval.}
    \label{fig:videogen_eval_win_tie_lose}
  \end{minipage}
  \hfill
  \begin{minipage}[t]{0.45\linewidth}
    \centering
    \includegraphics[width=\linewidth]{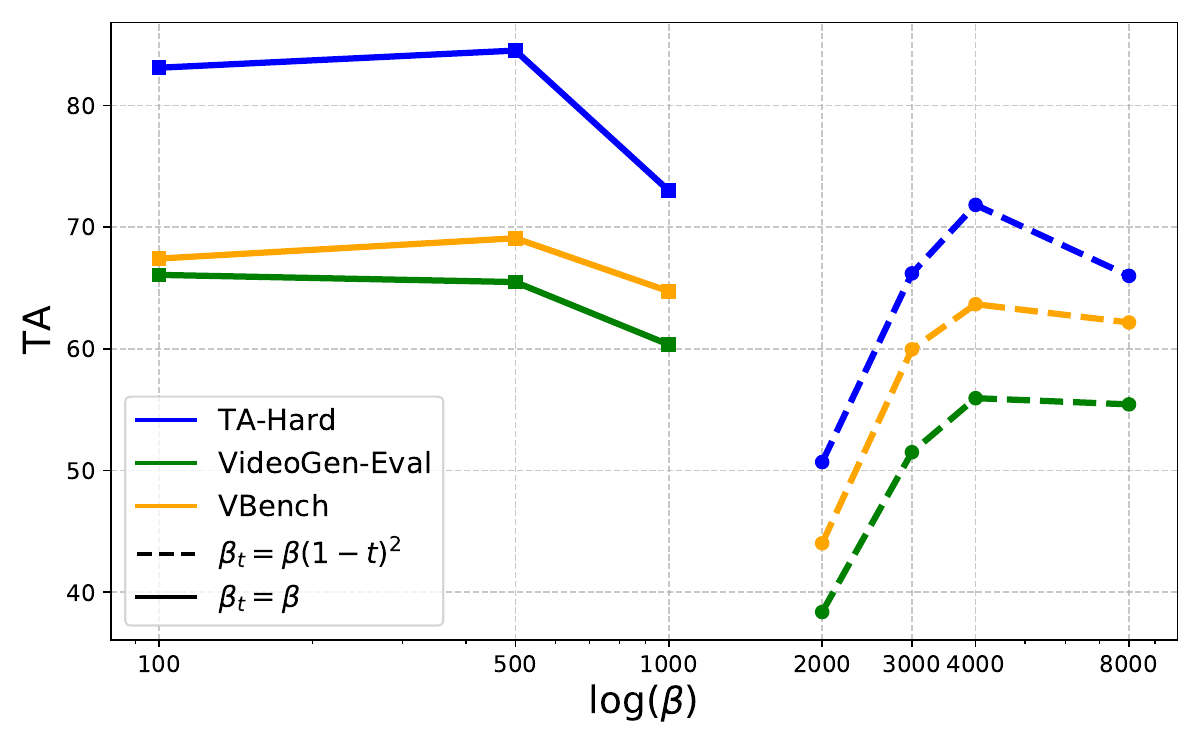}
    \caption{Accuracy of time-dependent \(\beta_t\) \textit{v.s.} constant \(\beta\) for TA.
    % : Flow-DPO with a constant \(\beta\) consistently outperforms the timestep-dependent \(\beta\) across various settings.
    }
    \label{fig:discussion_beta_ta}
  \end{minipage}
  \vspace{-1em}
\end{figure}

\paragraph{Multi-dimensional Alignment.}  
We use linear scalarization~\cite{li2020deep}, \( r = \frac{1}{3}(r_{\text{vq}} + r_{\text{mq}} + r_{\text{ta}}) \), to aggregate multi-dimensional preferences into a single score, and forming a dataset \( D = \{(\vy, \vx^w, \vx^l)\}\) where \( r(\vx^w, \vy) > r(\vx^l, \vy) \).  
Table~\ref{tab:multi_dimension_alignment} shows that Flow-DPO with a constant \(\beta\) significantly improves over the pretrained model across all three dimensions and outperforms both SFT and Flow-RWR. In contrast, Flow-DPO with timestep-dependent \(\beta\) underperforms the pretrained model on TA, despite high VQ/MQ reward win rates due to reward hacking. Meanwhile, the constant-\(\beta\) variant achieves high VQ/MQ scores without such hacking issues, suggesting it learns TA more effectively. We discuss this further in Sec.~\ref{sec:ablation_beta}.  
Figures~\ref{fig:videogen_eval_win_tie_lose} confirms these findings in human studies.

\paragraph{Single-dimensional Alignment.}\label{sec:single_dimensional_alignment}
We also investigate the ability of different methods on a specific task: TA. Tab.~\ref{tab:single_dimensional_alignment} shows that Flow-DPO with a constant \(\beta\) achieves best performance across all datasets.

\vspace{-1em}
\begin{table}[!h]
  \centering
  \begin{minipage}[t]{0.47\linewidth}
    \vspace{0.6em}
    \paragraph{Reward Guidance.} We apply linear scalarization to combine dimension rewards into a weighted sum and backpropagate gradients to the noised latent with guidance strength \(w=100\). Table~\ref{tab:reward_guidance_all} shows that users can steer generation toward custom trade-offs by simply choosing custom weights.
    
  \end{minipage}
  \hfill
  \begin{minipage}[t]{0.48\linewidth}
    \setlength\tabcolsep{15pt}
    \caption{Reward guidance on VideoGen-Eval.}
    \label{tab:reward_guidance_all}
    \centering
    \scriptsize
    \begin{tabular}{cccc}
    \toprule
    VQ:MQ:TA & \textbf{VQ} & \textbf{MQ} & \textbf{TA} \\ \midrule
    0:0:1   &60.56	&46.48	&70.42 \\
    0.1:0.1:0.8	&66.50	&63.73	&60.86 \\
    0.1:0.1:0.6	&68.94	&67.59	&53.28 \\
    0.5:0.5:0	&86.43	&93.23	&26.65 \\ \bottomrule
    \end{tabular}
  \end{minipage}
  % \label{fig:text_plus_plot}
  \vspace{-4mm}
\end{table}

\paragraph{Ablation on \(\beta\).}\label{sec:ablation_beta}
We meticulously adapted diffusion-DPO to flow-based models, resulting in Equation~\ref{eq:flow_dpo}, where \(\beta_t = \beta (1 - t)^2\). Figure~\ref{fig:discussion_beta_ta} shows that under various \(\beta\) values, a constant \(\beta\) consistently outperforms the timestep-dependent variant. This is likely because varying \(\beta\) across timesteps leads to uneven training~\cite{hang2023efficient}, given that T2V models use shared weights for different noise levels.
\vspace{-2mm}
\section{Conclusion}\label{sec:conclusion}
\vspace{-2mm}
We present a large-scale preference dataset of 182k human annotations covering visual quality, motion quality, and text alignment for modern video generation models. Building on this dataset, we introduce \textit{VideoReward}, a multi-dimensional video reward model, and establish the \textit{VideoGen-RewardBench} benchmark for more accurate and fair evaluations. From a unified reinforcement learning perspective, we further propose three alignment algorithms tailored to flow-based video generation, demonstrating their effectiveness in practice.

\section{Limitations \& Future Work}\label{sec:app:limitation}
In our experiments, excessive training with Flow-DPO led to a significant deterioration in model quality, despite improvements in alignment across specific dimensions. To prevent this decline, we employed LoRA training. Future work can explore the simultaneous use of high-quality data for supervised learning during DPO training, aiming to preserve video quality while enhancing alignment. Additionally, our algorithms have been validated on text-to-video tasks; future work can extend the validation to other conditional generation tasks, such as image-to-video generation.
Moreover, despite our efforts like incorporating tie annotations into the modeling, our VideoReward model still carries a potential risk of reward hacking, where human assessments indicate a marked decrease in video quality, yet the reward model continues to assign high scores. This issue arises because the reward function is differentiable, making it susceptible to manipulation. Future research should focus on developing more robust reward models, potentially by incorporating uncertainty estimates and increasing data augmentation.
Additionally, there is potential to apply more RLHF algorithms, such as PPO, to flow-based video generation tasks.

\section*{Acknowledgements}
This work was supported by the JC STEM Lab of AI for Science and Engineering, which is funded by The Hong Kong Jockey Club Charities Trust and the Research Grants Council of Hong Kong (Project No. CUHK14213224). It was also supported by the National Natural Science Foundation of China (Grant No. 62576191).
\bibliographystyle{plainnat}
\bibliography{
references/reward,
references/alignment, 
references/search,
references/proxy_tuning,
references/models,
references/benchmarks,
references/others,
references/video_models
}

\begin{thebibliography}{90}
\providecommand{\natexlab}[1]{#1}
\providecommand{\url}[1]{\texttt{#1}}
\expandafter\ifx\csname urlstyle\endcsname\relax
  \providecommand{\doi}[1]{doi: #1}\else
  \providecommand{\doi}{doi: \begingroup \urlstyle{rm}\Url}\fi

\bibitem[Achiam et~al.(2023)Achiam, Adler, Agarwal, Ahmad, Akkaya, Aleman, Almeida, Altenschmidt, Altman, Anadkat, et~al.]{achiam2023gpt}
Josh Achiam, Steven Adler, Sandhini Agarwal, Lama Ahmad, Ilge Akkaya, Florencia~Leoni Aleman, Diogo Almeida, Janko Altenschmidt, Sam Altman, Shyamal Anadkat, et~al.
\newblock Gpt-4 technical report.
\newblock \emph{arXiv preprint arXiv:2303.08774}, 2023.

\bibitem[Black et~al.(2023)Black, Janner, Du, Kostrikov, and Levine]{black2023training}
Kevin Black, Michael Janner, Yilun Du, Ilya Kostrikov, and Sergey Levine.
\newblock Training diffusion models with reinforcement learning.
\newblock \emph{arXiv preprint arXiv:2305.13301}, 2023.

\bibitem[Blattmann et~al.(2023)Blattmann, Dockhorn, Kulal, Mendelevitch, Kilian, Lorenz, Levi, English, Voleti, Letts, et~al.]{blattmann2023stable}
Andreas Blattmann, Tim Dockhorn, Sumith Kulal, Daniel Mendelevitch, Maciej Kilian, Dominik Lorenz, Yam Levi, Zion English, Vikram Voleti, Adam Letts, et~al.
\newblock Stable video diffusion: Scaling latent video diffusion models to large datasets.
\newblock \emph{arXiv preprint arXiv:2311.15127}, 2023.

\bibitem[Bradley and Terry(1952)]{bradley1952rank}
Ralph~Allan Bradley and Milton~E Terry.
\newblock Rank analysis of incomplete block designs: I. the method of paired comparisons.
\newblock \emph{Biometrika}, 39\penalty0 (3/4):\penalty0 324--345, 1952.

\bibitem[Brooks et~al.(2024)Brooks, Peebles, Holmes, DePue, Guo, Jing, Schnurr, Taylor, Luhman, Luhman, Ng, Wang, and Ramesh]{videoworldsimulators2024}
Tim Brooks, Bill Peebles, Connor Holmes, Will DePue, Yufei Guo, Li~Jing, David Schnurr, Joe Taylor, Troy Luhman, Eric Luhman, Clarence Ng, Ricky Wang, and Aditya Ramesh.
\newblock Video generation models as world simulators.
\newblock 2024.

\bibitem[Capcut(2024)]{dreamina}
Capcut.
\newblock Dreamina.
\newblock \emph{https://dreamina.capcut.com/ai-tool/home}, 2024.

\bibitem[Chen et~al.(2024{\natexlab{a}})Chen, Wang, Wu, Liao, Sun, Yan, and Lin]{chen2024enhancing}
Chaofeng Chen, Annan Wang, Haoning Wu, Liang Liao, Wenxiu Sun, Qiong Yan, and Weisi Lin.
\newblock Enhancing diffusion models with text-encoder reinforcement learning.
\newblock In \emph{European Conference on Computer Vision}, pages 182--198. Springer, 2024{\natexlab{a}}.

\bibitem[Chen et~al.(2024{\natexlab{b}})Chen, Zhang, Cun, Xia, Wang, Weng, and Shan]{chen2024videocrafter2}
Haoxin Chen, Yong Zhang, Xiaodong Cun, Menghan Xia, Xintao Wang, Chao Weng, and Ying Shan.
\newblock Videocrafter2: Overcoming data limitations for high-quality video diffusion models.
\newblock In \emph{Proceedings of the IEEE/CVF Conference on Computer Vision and Pattern Recognition}, pages 7310--7320, 2024{\natexlab{b}}.

\bibitem[Clark et~al.(2023)Clark, Vicol, Swersky, and Fleet]{clark2023directly}
Kevin Clark, Paul Vicol, Kevin Swersky, and David~J Fleet.
\newblock Directly fine-tuning diffusion models on differentiable rewards.
\newblock \emph{arXiv preprint arXiv:2309.17400}, 2023.

\bibitem[Dempster et~al.(1977)Dempster, Laird, and Rubin]{dempster1977maximum}
Arthur~P Dempster, Nan~M Laird, and Donald~B Rubin.
\newblock Maximum likelihood from incomplete data via the em algorithm.
\newblock \emph{Journal of the royal statistical society: series B (methodological)}, 39\penalty0 (1):\penalty0 1--22, 1977.

\bibitem[Deutsch et~al.(2023)Deutsch, Foster, and Freitag]{deutsch2023ties}
Daniel Deutsch, George Foster, and Markus Freitag.
\newblock Ties matter: Meta-evaluating modern metrics with pairwise accuracy and tie calibration.
\newblock \emph{arXiv preprint arXiv:2305.14324}, 2023.

\bibitem[Dhariwal and Nichol(2021)]{dhariwal2021diffusion}
Prafulla Dhariwal and Alexander Nichol.
\newblock Diffusion models beat gans on image synthesis.
\newblock \emph{Advances in neural information processing systems}, 34:\penalty0 8780--8794, 2021.

\bibitem[Domingo-Enrich et~al.(2024)Domingo-Enrich, Drozdzal, Karrer, and Chen]{domingo2024adjoint}
Carles Domingo-Enrich, Michal Drozdzal, Brian Karrer, and Ricky~TQ Chen.
\newblock Adjoint matching: Fine-tuning flow and diffusion generative models with memoryless stochastic optimal control.
\newblock \emph{arXiv preprint arXiv:2409.08861}, 2024.

\bibitem[Dong et~al.(2023)Dong, Xiong, Goyal, Zhang, Chow, Pan, Diao, Zhang, Shum, and Zhang]{dong2023raft}
Hanze Dong, Wei Xiong, Deepanshu Goyal, Yihan Zhang, Winnie Chow, Rui Pan, Shizhe Diao, Jipeng Zhang, Kashun Shum, and Tong Zhang.
\newblock Raft: Reward ranked finetuning for generative foundation model alignment.
\newblock \emph{arXiv preprint arXiv:2304.06767}, 2023.

\bibitem[Esser et~al.(2024)Esser, Kulal, Blattmann, Entezari, M{\"u}ller, Saini, Levi, Lorenz, Sauer, Boesel, et~al.]{esser2024scaling}
Patrick Esser, Sumith Kulal, Andreas Blattmann, Rahim Entezari, Jonas M{\"u}ller, Harry Saini, Yam Levi, Dominik Lorenz, Axel Sauer, Frederic Boesel, et~al.
\newblock Scaling rectified flow transformers for high-resolution image synthesis.
\newblock In \emph{Forty-first International Conference on Machine Learning}, 2024.

\bibitem[Fan et~al.(2024)Fan, Watkins, Du, Liu, Ryu, Boutilier, Abbeel, Ghavamzadeh, Lee, and Lee]{fan2024reinforcement}
Ying Fan, Olivia Watkins, Yuqing Du, Hao Liu, Moonkyung Ryu, Craig Boutilier, Pieter Abbeel, Mohammad Ghavamzadeh, Kangwook Lee, and Kimin Lee.
\newblock Reinforcement learning for fine-tuning text-to-image diffusion models.
\newblock \emph{Advances in Neural Information Processing Systems}, 36, 2024.

\bibitem[Furuta et~al.(2024)Furuta, Zen, Schuurmans, Faust, Matsuo, Liang, and Yang]{furuta2024improving}
Hiroki Furuta, Heiga Zen, Dale Schuurmans, Aleksandra Faust, Yutaka Matsuo, Percy Liang, and Sherry Yang.
\newblock Improving dynamic object interactions in text-to-video generation with ai feedback.
\newblock \emph{arXiv preprint arXiv:2412.02617}, 2024.

\bibitem[Hang et~al.(2023)Hang, Gu, Li, Bao, Chen, Hu, Geng, and Guo]{hang2023efficient}
Tiankai Hang, Shuyang Gu, Chen Li, Jianmin Bao, Dong Chen, Han Hu, Xin Geng, and Baining Guo.
\newblock Efficient diffusion training via min-snr weighting strategy.
\newblock In \emph{Proceedings of the IEEE/CVF International Conference on Computer Vision}, pages 7441--7451, 2023.

\bibitem[He et~al.(2024)He, Jiang, Zhang, Ku, Soni, Siu, Chen, Chandra, Jiang, Arulraj, et~al.]{he2024videoscore}
Xuan He, Dongfu Jiang, Ge~Zhang, Max Ku, Achint Soni, Sherman Siu, Haonan Chen, Abhranil Chandra, Ziyan Jiang, Aaran Arulraj, et~al.
\newblock Videoscore: Building automatic metrics to simulate fine-grained human feedback for video generation.
\newblock \emph{arXiv preprint arXiv:2406.15252}, 2024.

\bibitem[Heusel et~al.(2017)Heusel, Ramsauer, Unterthiner, Nessler, and Hochreiter]{heusel2017gans}
Martin Heusel, Hubert Ramsauer, Thomas Unterthiner, Bernhard Nessler, and Sepp Hochreiter.
\newblock Gans trained by a two time-scale update rule converge to a local nash equilibrium.
\newblock \emph{Advances in neural information processing systems}, 30, 2017.

\bibitem[HidreamAI(2024)]{hidream}
HidreamAI.
\newblock Hidreamai.
\newblock \emph{https://www.hidreamai.com/}, 2024.

\bibitem[Ho et~al.(2020)Ho, Jain, and Abbeel]{ho2020denoising}
Jonathan Ho, Ajay Jain, and Pieter Abbeel.
\newblock Denoising diffusion probabilistic models.
\newblock \emph{Advances in neural information processing systems}, 33:\penalty0 6840--6851, 2020.

\bibitem[Hong et~al.(2024)Hong, Wang, Ding, Yu, Lv, Wang, Cheng, Huang, Ji, Xue, et~al.]{hong2024cogvlm2}
Wenyi Hong, Weihan Wang, Ming Ding, Wenmeng Yu, Qingsong Lv, Yan Wang, Yean Cheng, Shiyu Huang, Junhui Ji, Zhao Xue, et~al.
\newblock Cogvlm2: Visual language models for image and video understanding.
\newblock \emph{arXiv preprint arXiv:2408.16500}, 2024.

\bibitem[Hu et~al.(2021)Hu, Shen, Wallis, Allen-Zhu, Li, Wang, Wang, and Chen]{hu2021lora}
Edward~J Hu, Yelong Shen, Phillip Wallis, Zeyuan Allen-Zhu, Yuanzhi Li, Shean Wang, Lu~Wang, and Weizhu Chen.
\newblock Lora: Low-rank adaptation of large language models.
\newblock \emph{arXiv preprint arXiv:2106.09685}, 2021.

\bibitem[Huang et~al.(2023)Huang, Sun, Xie, Li, and Liu]{huang2023t2i}
Kaiyi Huang, Kaiyue Sun, Enze Xie, Zhenguo Li, and Xihui Liu.
\newblock T2i-compbench: A comprehensive benchmark for open-world compositional text-to-image generation.
\newblock \emph{Advances in Neural Information Processing Systems}, 36:\penalty0 78723--78747, 2023.

\bibitem[Huang et~al.(2024)Huang, He, Yu, Zhang, Si, Jiang, Zhang, Wu, Jin, Chanpaisit, et~al.]{huang2024vbench}
Ziqi Huang, Yinan He, Jiashuo Yu, Fan Zhang, Chenyang Si, Yuming Jiang, Yuanhan Zhang, Tianxing Wu, Qingyang Jin, Nattapol Chanpaisit, et~al.
\newblock Vbench: Comprehensive benchmark suite for video generative models.
\newblock In \emph{Proceedings of the IEEE/CVF Conference on Computer Vision and Pattern Recognition}, pages 21807--21818, 2024.

\bibitem[Jaech et~al.(2024)Jaech, Kalai, Lerer, Richardson, El-Kishky, Low, Helyar, Madry, Beutel, Carney, et~al.]{jaech2024openai}
Aaron Jaech, Adam Kalai, Adam Lerer, Adam Richardson, Ahmed El-Kishky, Aiden Low, Alec Helyar, Aleksander Madry, Alex Beutel, Alex Carney, et~al.
\newblock Openai o1 system card.
\newblock \emph{arXiv preprint arXiv:2412.16720}, 2024.

\bibitem[Jiang et~al.(2024{\natexlab{a}})Jiang, He, Zeng, Wei, Ku, Liu, and Chen]{jiang2024mantis}
Dongfu Jiang, Xuan He, Huaye Zeng, Cong Wei, Max Ku, Qian Liu, and Wenhu Chen.
\newblock Mantis: Interleaved multi-image instruction tuning.
\newblock \emph{arXiv preprint arXiv:2405.01483}, 2024{\natexlab{a}}.

\bibitem[Jiang et~al.(2024{\natexlab{b}})Jiang, Ku, Li, Ni, Sun, Fan, and Chen]{jiang2024genai}
Dongfu Jiang, Max Ku, Tianle Li, Yuansheng Ni, Shizhuo Sun, Rongqi Fan, and Wenhu Chen.
\newblock Genai arena: An open evaluation platform for generative models.
\newblock \emph{arXiv preprint arXiv:2406.04485}, 2024{\natexlab{b}}.

\bibitem[Kingma(2014)]{kingma2014adam}
Diederik~P Kingma.
\newblock Adam: A method for stochastic optimization.
\newblock \emph{arXiv preprint arXiv:1412.6980}, 2014.

\bibitem[Kirstain et~al.(2023)Kirstain, Polyak, Singer, Matiana, Penna, and Levy]{kirstain2023pick}
Yuval Kirstain, Adam Polyak, Uriel Singer, Shahbuland Matiana, Joe Penna, and Omer Levy.
\newblock Pick-a-pic: An open dataset of user preferences for text-to-image generation.
\newblock \emph{Advances in Neural Information Processing Systems}, 36:\penalty0 36652--36663, 2023.

\bibitem[Kong et~al.(2024)Kong, Tian, Zhang, Min, Dai, Zhou, Xiong, Li, Wu, Zhang, et~al.]{kong2024hunyuanvideo}
Weijie Kong, Qi~Tian, Zijian Zhang, Rox Min, Zuozhuo Dai, Jin Zhou, Jiangfeng Xiong, Xin Li, Bo~Wu, Jianwei Zhang, et~al.
\newblock Hunyuanvideo: A systematic framework for large video generative models.
\newblock \emph{arXiv preprint arXiv:2412.03603}, 2024.

\bibitem[Kuaishou(2024)]{kuaishou2024kling}
Kuaishou.
\newblock Kling ai.
\newblock \emph{https://klingai.kuaishou.com/}, 2024.

\bibitem[Labs(2023)]{pika2023pika}
Pika Labs.
\newblock Pika 1.0.
\newblock \emph{https://pika.art/}, 2023.

\bibitem[Lambert et~al.(2024)Lambert, Pyatkin, Morrison, Miranda, Lin, Chandu, Dziri, Kumar, Zick, Choi, et~al.]{lambert2024rewardbench}
Nathan Lambert, Valentina Pyatkin, Jacob Morrison, LJ~Miranda, Bill~Yuchen Lin, Khyathi Chandu, Nouha Dziri, Sachin Kumar, Tom Zick, Yejin Choi, et~al.
\newblock Rewardbench: Evaluating reward models for language modeling.
\newblock \emph{arXiv preprint arXiv:2403.13787}, 2024.

\bibitem[Lee et~al.(2023)Lee, Liu, Ryu, Watkins, Du, Boutilier, Abbeel, Ghavamzadeh, and Gu]{lee2023aligning}
Kimin Lee, Hao Liu, Moonkyung Ryu, Olivia Watkins, Yuqing Du, Craig Boutilier, Pieter Abbeel, Mohammad Ghavamzadeh, and Shixiang~Shane Gu.
\newblock Aligning text-to-image models using human feedback.
\newblock \emph{arXiv preprint arXiv:2302.12192}, 2023.

\bibitem[Li et~al.(2024{\natexlab{a}})Li, Feng, Fu, Wang, Basu, Chen, and Wang]{li2024t2v1}
Jiachen Li, Weixi Feng, Tsu-Jui Fu, Xinyi Wang, Sugato Basu, Wenhu Chen, and William~Yang Wang.
\newblock T2v-turbo: Breaking the quality bottleneck of video consistency model with mixed reward feedback.
\newblock \emph{arXiv preprint arXiv:2405.18750}, 2024{\natexlab{a}}.

\bibitem[Li et~al.(2024{\natexlab{b}})Li, Long, Zheng, Gao, Piramuthu, Chen, and Wang]{li2024t2v}
Jiachen Li, Qian Long, Jian Zheng, Xiaofeng Gao, Robinson Piramuthu, Wenhu Chen, and William~Yang Wang.
\newblock T2v-turbo-v2: Enhancing video generation model post-training through data, reward, and conditional guidance design.
\newblock \emph{arXiv preprint arXiv:2410.05677}, 2024{\natexlab{b}}.

\bibitem[Li et~al.(2023)Li, Sun, Yuan, Fan, Zhao, and Liu]{li2023generative}
Junlong Li, Shichao Sun, Weizhe Yuan, Run-Ze Fan, Hai Zhao, and Pengfei Liu.
\newblock Generative judge for evaluating alignment.
\newblock \emph{arXiv preprint arXiv:2310.05470}, 2023.

\bibitem[Li et~al.(2020)Li, Zhang, and Wang]{li2020deep}
Kaiwen Li, Tao Zhang, and Rui Wang.
\newblock Deep reinforcement learning for multiobjective optimization.
\newblock \emph{IEEE transactions on cybernetics}, 51\penalty0 (6):\penalty0 3103--3114, 2020.

\bibitem[Liang et~al.(2024{\natexlab{a}})Liang, He, Li, Li, Klimovskiy, Carolan, Sun, Pont-Tuset, Young, Yang, et~al.]{liang2024rich}
Youwei Liang, Junfeng He, Gang Li, Peizhao Li, Arseniy Klimovskiy, Nicholas Carolan, Jiao Sun, Jordi Pont-Tuset, Sarah Young, Feng Yang, et~al.
\newblock Rich human feedback for text-to-image generation.
\newblock In \emph{Proceedings of the IEEE/CVF Conference on Computer Vision and Pattern Recognition}, pages 19401--19411, 2024{\natexlab{a}}.

\bibitem[Liang et~al.(2024{\natexlab{b}})Liang, Yuan, Gu, Chen, Hang, Li, and Zheng]{liang2024step}
Zhanhao Liang, Yuhui Yuan, Shuyang Gu, Bohan Chen, Tiankai Hang, Ji~Li, and Liang Zheng.
\newblock Step-aware preference optimization: Aligning preference with denoising performance at each step.
\newblock \emph{arXiv preprint arXiv:2406.04314}, 2024{\natexlab{b}}.

\bibitem[Lin et~al.(2024{\natexlab{a}})Lin, Yin, Ping, Molchanov, Shoeybi, and Han]{lin2024vila}
Ji~Lin, Hongxu Yin, Wei Ping, Pavlo Molchanov, Mohammad Shoeybi, and Song Han.
\newblock Vila: On pre-training for visual language models.
\newblock In \emph{Proceedings of the IEEE/CVF Conference on Computer Vision and Pattern Recognition}, pages 26689--26699, 2024{\natexlab{a}}.

\bibitem[Lin et~al.(2024{\natexlab{b}})Lin, Gou, Liang, Luo, Liu, and Yang]{lin2024criticbench}
Zicheng Lin, Zhibin Gou, Tian Liang, Ruilin Luo, Haowei Liu, and Yujiu Yang.
\newblock Criticbench: Benchmarking llms for critique-correct reasoning.
\newblock \emph{arXiv preprint arXiv:2402.14809}, 2024{\natexlab{b}}.

\bibitem[Lipman et~al.(2022)Lipman, Chen, Ben-Hamu, Nickel, and Le]{lipman2022flow}
Yaron Lipman, Ricky~TQ Chen, Heli Ben-Hamu, Maximilian Nickel, and Matt Le.
\newblock Flow matching for generative modeling.
\newblock \emph{arXiv preprint arXiv:2210.02747}, 2022.

\bibitem[Liu et~al.(2024{\natexlab{a}})Liu, Ge, and Zhu]{liu2024reward}
Jinsong Liu, Dongdong Ge, and Ruihao Zhu.
\newblock Reward learning from preference with ties.
\newblock \emph{arXiv preprint arXiv:2410.05328}, 2024{\natexlab{a}}.

\bibitem[Liu et~al.(2024{\natexlab{b}})Liu, Wu, Ziqiang, Wei, He, Pi, and Chen]{liu2024videodpo}
Runtao Liu, Haoyu Wu, Zheng Ziqiang, Chen Wei, Yingqing He, Renjie Pi, and Qifeng Chen.
\newblock Videodpo: Omni-preference alignment for video diffusion generation.
\newblock \emph{arXiv preprint arXiv:2412.14167}, 2024{\natexlab{b}}.

\bibitem[Liu et~al.(2022)Liu, Gong, and Liu]{liu2022flow}
Xingchao Liu, Chengyue Gong, and Qiang Liu.
\newblock Flow straight and fast: Learning to generate and transfer data with rectified flow.
\newblock \emph{arXiv preprint arXiv:2209.03003}, 2022.

\bibitem[Liu et~al.(2024{\natexlab{c}})Liu, Cun, Liu, Wang, Zhang, Chen, Liu, Zeng, Chan, and Shan]{liu2024evalcrafter}
Yaofang Liu, Xiaodong Cun, Xuebo Liu, Xintao Wang, Yong Zhang, Haoxin Chen, Yang Liu, Tieyong Zeng, Raymond Chan, and Ying Shan.
\newblock Evalcrafter: Benchmarking and evaluating large video generation models.
\newblock In \emph{Proceedings of the IEEE/CVF Conference on Computer Vision and Pattern Recognition}, pages 22139--22149, 2024{\natexlab{c}}.

\bibitem[LumaLabs(2024)]{luma2024dm}
LumaLabs.
\newblock Dream machine.
\newblock \emph{https://lumalabs.ai/dream-machine}, 2024.

\bibitem[OpenAI(2024)]{openai2024sora}
OpenAI.
\newblock Video generation models as world simulators.
\newblock \emph{https://openai.com/index/video-generation-models-as-world-simulators}, 2024.

\bibitem[Ouyang et~al.(2022)Ouyang, Wu, Jiang, Almeida, Wainwright, Mishkin, Zhang, Agarwal, Slama, Ray, et~al.]{ouyang2022training}
Long Ouyang, Jeffrey Wu, Xu~Jiang, Diogo Almeida, Carroll Wainwright, Pamela Mishkin, Chong Zhang, Sandhini Agarwal, Katarina Slama, Alex Ray, et~al.
\newblock Training language models to follow instructions with human feedback.
\newblock \emph{Advances in neural information processing systems}, 35:\penalty0 27730--27744, 2022.

\bibitem[Peebles and Xie(2023)]{peebles2023scalable}
William Peebles and Saining Xie.
\newblock Scalable diffusion models with transformers.
\newblock In \emph{Proceedings of the IEEE/CVF international conference on computer vision}, pages 4195--4205, 2023.

\bibitem[Peng et~al.(2019)Peng, Kumar, Zhang, and Levine]{peng2019advantage}
Xue~Bin Peng, Aviral Kumar, Grace Zhang, and Sergey Levine.
\newblock Advantage-weighted regression: Simple and scalable off-policy reinforcement learning.
\newblock \emph{arXiv preprint arXiv:1910.00177}, 2019.

\bibitem[Peters and Schaal(2007)]{peters2007reinforcement}
Jan Peters and Stefan Schaal.
\newblock Reinforcement learning by reward-weighted regression for operational space control.
\newblock In \emph{Proceedings of the 24th international conference on Machine learning}, pages 745--750, 2007.

\bibitem[PixVerse(2024)]{pixverse}
PixVerse.
\newblock Pixverse.
\newblock \emph{https://pixverse.ai/}, 2024.

\bibitem[Polyak et~al.(2024)Polyak, Zohar, Brown, Tjandra, Sinha, Lee, Vyas, Shi, Ma, Chuang, et~al.]{polyak2024movie}
Adam Polyak, Amit Zohar, Andrew Brown, Andros Tjandra, Animesh Sinha, Ann Lee, Apoorv Vyas, Bowen Shi, Chih-Yao Ma, Ching-Yao Chuang, et~al.
\newblock Movie gen: A cast of media foundation models.
\newblock \emph{arXiv preprint arXiv:2410.13720}, 2024.

\bibitem[Prabhudesai et~al.(2023)Prabhudesai, Goyal, Pathak, and Fragkiadaki]{prabhudesai2023aligning}
Mihir Prabhudesai, Anirudh Goyal, Deepak Pathak, and Katerina Fragkiadaki.
\newblock Aligning text-to-image diffusion models with reward backpropagation.
\newblock \emph{arXiv preprint arXiv:2310.03739}, 2023.

\bibitem[Prabhudesai et~al.(2024)Prabhudesai, Mendonca, Qin, Fragkiadaki, and Pathak]{prabhudesai2024video}
Mihir Prabhudesai, Russell Mendonca, Zheyang Qin, Katerina Fragkiadaki, and Deepak Pathak.
\newblock Video diffusion alignment via reward gradients.
\newblock \emph{arXiv preprint arXiv:2407.08737}, 2024.

\bibitem[Radford et~al.(2021)Radford, Kim, Hallacy, Ramesh, Goh, Agarwal, Sastry, Askell, Mishkin, Clark, et~al.]{radford2021learning}
Alec Radford, Jong~Wook Kim, Chris Hallacy, Aditya Ramesh, Gabriel Goh, Sandhini Agarwal, Girish Sastry, Amanda Askell, Pamela Mishkin, Jack Clark, et~al.
\newblock Learning transferable visual models from natural language supervision.
\newblock In \emph{International conference on machine learning}, pages 8748--8763. PMLR, 2021.

\bibitem[Rafailov et~al.(2024)Rafailov, Sharma, Mitchell, Manning, Ermon, and Finn]{rafailov2024direct}
Rafael Rafailov, Archit Sharma, Eric Mitchell, Christopher~D Manning, Stefano Ermon, and Chelsea Finn.
\newblock Direct preference optimization: Your language model is secretly a reward model.
\newblock \emph{Advances in Neural Information Processing Systems}, 36, 2024.

\bibitem[Rao and Kupper(1967)]{rao1967ties}
PV~Rao and Lawrence~L Kupper.
\newblock Ties in paired-comparison experiments: A generalization of the bradley-terry model.
\newblock \emph{Journal of the American Statistical Association}, 62\penalty0 (317):\penalty0 194--204, 1967.

\bibitem[Runway(2023)]{runway2024gen2}
Runway.
\newblock Gen-2: Generate novel videos with text, images or video clips.
\newblock \emph{https://runwayml.com/research/gen-2}, 2023.

\bibitem[Runway(2024)]{runway2024gen3}
Runway.
\newblock Gen-3.
\newblock \emph{https://runwayml.com/}, 2024.

\bibitem[Schulman et~al.(2017)Schulman, Wolski, Dhariwal, Radford, and Klimov]{schulman2017proximal}
John Schulman, Filip Wolski, Prafulla Dhariwal, Alec Radford, and Oleg Klimov.
\newblock Proximal policy optimization algorithms.
\newblock \emph{arXiv preprint arXiv:1707.06347}, 2017.

\bibitem[Song et~al.(2023)Song, Zhang, Yin, Mardani, Liu, Kautz, Chen, and Vahdat]{song2023loss}
Jiaming Song, Qinsheng Zhang, Hongxu Yin, Morteza Mardani, Ming-Yu Liu, Jan Kautz, Yongxin Chen, and Arash Vahdat.
\newblock Loss-guided diffusion models for plug-and-play controllable generation.
\newblock In \emph{International Conference on Machine Learning}, pages 32483--32498. PMLR, 2023.

\bibitem[Song et~al.(2020)Song, Sohl-Dickstein, Kingma, Kumar, Ermon, and Poole]{song2020score}
Yang Song, Jascha Sohl-Dickstein, Diederik~P Kingma, Abhishek Kumar, Stefano Ermon, and Ben Poole.
\newblock Score-based generative modeling through stochastic differential equations.
\newblock \emph{arXiv preprint arXiv:2011.13456}, 2020.

\bibitem[Tang et~al.(2024)Tang, Peng, Tang, Hong, Wang, and Chang]{tang2024tuning}
Zhiwei Tang, Jiangweizhi Peng, Jiasheng Tang, Mingyi Hong, Fan Wang, and Tsung-Hui Chang.
\newblock Tuning-free alignment of diffusion models with direct noise optimization.
\newblock \emph{arXiv preprint arXiv:2405.18881}, 2024.

\bibitem[Touvron et~al.(2023)Touvron, Martin, Stone, Albert, Almahairi, Babaei, Bashlykov, Batra, Bhargava, Bhosale, et~al.]{touvron2023llama2}
Hugo Touvron, Louis Martin, Kevin Stone, Peter Albert, Amjad Almahairi, Yasmine Babaei, Nikolay Bashlykov, Soumya Batra, Prajjwal Bhargava, Shruti Bhosale, et~al.
\newblock Llama 2: Open foundation and fine-tuned chat models.
\newblock \emph{arXiv preprint arXiv:2307.09288}, 2023.

\bibitem[VegaAI(2023)]{vega}
VegaAI.
\newblock Vegaai.
\newblock \emph{https://www.vegaai.net/}, 2023.

\bibitem[Wallace et~al.(2024)Wallace, Dang, Rafailov, Zhou, Lou, Purushwalkam, Ermon, Xiong, Joty, and Naik]{wallace2024diffusion}
Bram Wallace, Meihua Dang, Rafael Rafailov, Linqi Zhou, Aaron Lou, Senthil Purushwalkam, Stefano Ermon, Caiming Xiong, Shafiq Joty, and Nikhil Naik.
\newblock Diffusion model alignment using direct preference optimization.
\newblock In \emph{Proceedings of the IEEE/CVF Conference on Computer Vision and Pattern Recognition}, pages 8228--8238, 2024.

\bibitem[Wang et~al.(2024{\natexlab{a}})Wang, Bai, Tan, Wang, Fan, Bai, Chen, Liu, Wang, Ge, et~al.]{wang2024qwen2}
Peng Wang, Shuai Bai, Sinan Tan, Shijie Wang, Zhihao Fan, Jinze Bai, Keqin Chen, Xuejing Liu, Jialin Wang, Wenbin Ge, et~al.
\newblock Qwen2-vl: Enhancing vision-language model's perception of the world at any resolution.
\newblock \emph{arXiv preprint arXiv:2409.12191}, 2024{\natexlab{a}}.

\bibitem[Wang et~al.(2024{\natexlab{b}})Wang, Tan, Wang, Yang, Jin, and Li]{wang2024lift}
Yibin Wang, Zhiyu Tan, Junyan Wang, Xiaomeng Yang, Cheng Jin, and Hao Li.
\newblock Lift: Leveraging human feedback for text-to-video model alignment.
\newblock \emph{arXiv preprint arXiv:2412.04814}, 2024{\natexlab{b}}.

\bibitem[Wu et~al.(2025)Wu, Zou, Liang, Zhang, and Ma]{wu2025visualquality}
Tianhe Wu, Jian Zou, Jie Liang, Lei Zhang, and Kede Ma.
\newblock Visualquality-r1: Reasoning-induced image quality assessment via reinforcement learning to rank.
\newblock \emph{arXiv preprint arXiv:2505.14460}, 2025.

\bibitem[Wu et~al.(2023)Wu, Hao, Sun, Chen, Zhu, Zhao, and Li]{wu2023humanv2}
Xiaoshi Wu, Yiming Hao, Keqiang Sun, Yixiong Chen, Feng Zhu, Rui Zhao, and Hongsheng Li.
\newblock Human preference score v2: A solid benchmark for evaluating human preferences of text-to-image synthesis.
\newblock \emph{arXiv preprint arXiv:2306.09341}, 2023.

\bibitem[Xu et~al.(2024{\natexlab{a}})Xu, Huang, Cheng, Yang, Xu, Wang, Duan, Yang, Jin, Li, et~al.]{xu2024visionreward}
Jiazheng Xu, Yu~Huang, Jiale Cheng, Yuanming Yang, Jiajun Xu, Yuan Wang, Wenbo Duan, Shen Yang, Qunlin Jin, Shurun Li, et~al.
\newblock Visionreward: Fine-grained multi-dimensional human preference learning for image and video generation.
\newblock \emph{arXiv preprint arXiv:2412.21059}, 2024{\natexlab{a}}.

\bibitem[Xu et~al.(2024{\natexlab{b}})Xu, Liu, Wu, Tong, Li, Ding, Tang, and Dong]{xu2024imagereward}
Jiazheng Xu, Xiao Liu, Yuchen Wu, Yuxuan Tong, Qinkai Li, Ming Ding, Jie Tang, and Yuxiao Dong.
\newblock Imagereward: Learning and evaluating human preferences for text-to-image generation.
\newblock \emph{Advances in Neural Information Processing Systems}, 36, 2024{\natexlab{b}}.

\bibitem[Yang et~al.(2024{\natexlab{a}})Yang, Tao, Lyu, Ge, Chen, Shen, Zhu, and Li]{yang2024using}
Kai Yang, Jian Tao, Jiafei Lyu, Chunjiang Ge, Jiaxin Chen, Weihan Shen, Xiaolong Zhu, and Xiu Li.
\newblock Using human feedback to fine-tune diffusion models without any reward model.
\newblock In \emph{Proceedings of the IEEE/CVF Conference on Computer Vision and Pattern Recognition}, pages 8941--8951, 2024{\natexlab{a}}.

\bibitem[Yang et~al.(2024{\natexlab{b}})Yang, Teng, Zheng, Ding, Huang, Xu, Yang, Hong, Zhang, Feng, et~al.]{yang2024cogvideox}
Zhuoyi Yang, Jiayan Teng, Wendi Zheng, Ming Ding, Shiyu Huang, Jiazheng Xu, Yuanming Yang, Wenyi Hong, Xiaohan Zhang, Guanyu Feng, et~al.
\newblock Cogvideox: Text-to-video diffusion models with an expert transformer.
\newblock \emph{arXiv preprint arXiv:2408.06072}, 2024{\natexlab{b}}.

\bibitem[Yeh et~al.(2024)Yeh, Lee, and Chen]{yeh2024training}
Po-Hung Yeh, Kuang-Huei Lee, and Jun-Cheng Chen.
\newblock Training-free diffusion model alignment with sampling demons.
\newblock \emph{arXiv preprint arXiv:2410.05760}, 2024.

\bibitem[Yu et~al.(2023)Yu, Wang, Zhao, Ghanem, and Zhang]{yu2023freedom}
Jiwen Yu, Yinhuai Wang, Chen Zhao, Bernard Ghanem, and Jian Zhang.
\newblock Freedom: Training-free energy-guided conditional diffusion model.
\newblock In \emph{Proceedings of the IEEE/CVF International Conference on Computer Vision}, pages 23174--23184, 2023.

\bibitem[Yuan et~al.(2024{\natexlab{a}})Yuan, Zhang, Wang, Wei, Feng, Pan, Zhang, Liu, Albanie, and Ni]{yuan2024instructvideo}
Hangjie Yuan, Shiwei Zhang, Xiang Wang, Yujie Wei, Tao Feng, Yining Pan, Yingya Zhang, Ziwei Liu, Samuel Albanie, and Dong Ni.
\newblock Instructvideo: instructing video diffusion models with human feedback.
\newblock In \emph{Proceedings of the IEEE/CVF Conference on Computer Vision and Pattern Recognition}, pages 6463--6474, 2024{\natexlab{a}}.

\bibitem[Yuan et~al.(2024{\natexlab{b}})Yuan, Chen, Ji, and Gu]{yuan2024self}
Huizhuo Yuan, Zixiang Chen, Kaixuan Ji, and Quanquan Gu.
\newblock Self-play fine-tuning of diffusion models for text-to-image generation.
\newblock \emph{arXiv preprint arXiv:2402.10210}, 2024{\natexlab{b}}.

\bibitem[Zeng et~al.(2024)Zeng, Yang, Chen, and Liu]{zeng2024dawn}
Ailing Zeng, Yuhang Yang, Weidong Chen, and Wei Liu.
\newblock The dawn of video generation: Preliminary explorations with sora-like models.
\newblock \emph{arXiv preprint arXiv:2410.05227}, 2024.

\bibitem[Zhang et~al.(2024{\natexlab{a}})Zhang, Wu, Chen, Ji, Xiao, Huang, and Han]{zhang2024onlinevpo}
Jiacheng Zhang, Jie Wu, Weifeng Chen, Yatai Ji, Xuefeng Xiao, Weilin Huang, and Kai Han.
\newblock Onlinevpo: Align video diffusion model with online video-centric preference optimization.
\newblock \emph{arXiv preprint arXiv:2412.15159}, 2024{\natexlab{a}}.

\bibitem[Zhang et~al.(2024{\natexlab{b}})Zhang, Wang, Wu, Li, Gao, Zhang, and Wang]{zhang2024learning}
Sixian Zhang, Bohan Wang, Junqiang Wu, Yan Li, Tingting Gao, Di~Zhang, and Zhongyuan Wang.
\newblock Learning multi-dimensional human preference for text-to-image generation.
\newblock In \emph{Proceedings of the IEEE/CVF Conference on Computer Vision and Pattern Recognition}, pages 8018--8027, 2024{\natexlab{b}}.

\bibitem[Zhang et~al.(2024{\natexlab{c}})Zhang, Tzeng, Du, and Kislyuk]{zhang2024large}
Yinan Zhang, Eric Tzeng, Yilun Du, and Dmitry Kislyuk.
\newblock Large-scale reinforcement learning for diffusion models.
\newblock In \emph{European Conference on Computer Vision}, pages 1--17. Springer, 2024{\natexlab{c}}.

\bibitem[Zheng et~al.(2023)Zheng, Le, Shaul, Lipman, Grover, and Chen]{zheng2023guided}
Qinqing Zheng, Matt Le, Neta Shaul, Yaron Lipman, Aditya Grover, and Ricky~TQ Chen.
\newblock Guided flows for generative modeling and decision making.
\newblock \emph{arXiv preprint arXiv:2311.13443}, 2023.

\bibitem[Zheng et~al.(2024)Zheng, Peng, Yang, Shen, Li, Liu, Zhou, Li, and You]{zheng2024open}
Zangwei Zheng, Xiangyu Peng, Tianji Yang, Chenhui Shen, Shenggui Li, Hongxin Liu, Yukun Zhou, Tianyi Li, and Yang You.
\newblock Open-sora: Democratizing efficient video production for all.
\newblock \emph{arXiv preprint arXiv:2412.20404}, 2024.

\bibitem[Zhou et~al.(2023)Zhou, Liu, Yang, Shao, Liu, Yue, Ouyang, and Qiao]{zhou2023beyond}
Zhanhui Zhou, Jie Liu, Chao Yang, Jing Shao, Yu~Liu, Xiangyu Yue, Wanli Ouyang, and Yu~Qiao.
\newblock Beyond one-preference-for-all: Multi-objective direct preference optimization.
\newblock \emph{arXiv preprint arXiv:2310.03708}, 2023.

\end{thebibliography}

\clearpage
\appendix
\appendixtitle

\noindent Our Appendix consists of 8 sections. Readers can click on each section number to navigate to the corresponding section:
\vspace{-0.6em}
\begin{itemize}[leftmargin=2.0em]
    \item Section~\ref{sec:related_works} provides a review of related works.
    % \item Section~\ref{sec:app:limitation} discusses future work and limitations.
    \item Section~\ref{app:sec:math} provides detailed derivations and lemma proofs for Flow-DPO, Flow-RWR, and Flow-NRG.
    \item Section~\ref{app:sec:pesudocode} provides pseudo-code of Flow-DPO and Flow-NRG.
    \item Section~\ref{app:architecture_vdm} presents the architecture of the internal video diffusion model employed in our work.
    \item Section~\ref{app:sec:extened-exp-results} offers ablation study on key design choices of our reward model.
    \item Section~\ref{sec:app:data_statistics} summarizes dataset statistics.
    \item Section~\ref{sec:appendix_anno_details} explains details of human annotaion and guidelines.
    \item Section~\ref{app:sec:details_eval} provides details about reward model evaluation, including a comparison of the two evaluation benchmarks, evaluation procedures of all methods, and the metrics employed.
    \item Section~\ref{app:sec:hyperparameters} lists hyperparameters for our reward modeling and alignment algorithms.
    \item Section~\ref{app:sec:additional_visual} provides additional visual comparisons between the original model and the Flow-DPO aligned model.
    \item Section~\ref{sec:appendix_input_template} provides the input template used for reward model.
    \item Section~\ref{sec:app:ta_hard_prompt} provides part of prompts used in our TA-Hard dataset.
    % \item Section~\ref{app:sec:impact} provides broader impacts.
\end{itemize}
\vspace{-0.4em}

\section{Related Works}\label{sec:related_works}
\paragraph{Evaluation and Reward Models.}
Evaluation models and reward models play a pivotal role in aligning generative models with human preferences. Earlier approaches and benchmarks~\cite{huang2023t2i,liu2024evalcrafter,huang2024vbench} relied on metrics like FID~\cite{heusel2017gans} and CLIP scores~\cite{radford2021learning} to assess visual quality and semantic consistency. 
Recent works~\cite{wu2023humanv2, xu2024imagereward, kirstain2023pick, zhang2024learning, liang2024rich}  have shifted towards utilizing human preference datasets to train CLIP-based models, enabling them to predict preference scores with improved accuracy.
With the advent of large vision-language models~(VLMs)~\cite{achiam2023gpt, wang2024qwen2}, their powerful capabilities in visual understanding and text-visual alignment make them a natural proxy for reward modeling. 
A common approach involves replacing the token classification head of VLMs with a regression head that predicts multi-dimensional scores for diverse evaluation tasks.

Two main learning paradigms have emerged based on the type of human annotation. The first paradigm relies on point-wise regression, where the model learns to fit annotated scores~\cite{he2024videoscore} or labels~\cite{xu2024visionreward} directly. Another paradigm focuses on pair-wise comparisons~\cite{wu2025visualquality}, leveraging Bradley-Terry (BT)~\cite{bradley1952rank} loss or rank loss to model relative preferences, which is largely unexplored for video reward model. Beyond these methods, some works~\cite{wang2024lift} also leverage the intrinsic reasoning capabilities of VLMs through VLM-as-a-judge~\cite{li2023generative, lin2024criticbench}, where VLMs are adopted to generate preference judgments or scores in textual format through instruction tuning.
Despite these promising advances, most existing video reward models primarily focus on legacy video generation models, typically from the pre-Sora~\cite{openai2024sora} era, which are constrained by short video durations and relatively low quality. Furthermore, the technical choices underlying the vision reward models remain underexplored. Our work seeks to address these limitations by focusing more on modern video generation models and investigating a broader range of reward modeling strategies.

\vspace{-2mm}
\paragraph{Alignment for Image \& Video Generation.}
In large language models, Reinforcement Learning from Human Feedback (RLHF) improves alignment with human preferences, enhancing response quality~\cite{ouyang2022training, jaech2024openai}. Similar methods have been applied to image generation, using reward models or human preference data to align pretrained models. Key approaches include: (1) direct backpropagation with reward signals \cite{prabhudesai2023aligning, clark2023directly, xu2024imagereward, prabhudesai2024video}; (2) Reward-Weighted Regression (RWR) \cite{peng2019advantage, lee2023aligning, furuta2024improving}; (3) DPO-based policy optimization \cite{rafailov2024direct, wallace2024diffusion, liang2024step, dong2023raft, yang2024using, liang2024step, yuan2024self, liu2024videodpo, zhang2024onlinevpo, furuta2024improving}; (4) PPO-based policy gradients \cite{schulman2017proximal} \cite{black2023training, fan2024reinforcement, zhang2024large, chen2024enhancing}; and (5) training-free alignment \cite{yeh2024training, tang2024tuning, song2023loss}. These methods have successfully aligned image generation models with human preferences, improving aesthetics and semantic consistency. They focus on improving the accuracy of rewards on noised images using reward models trained on clean images, whereas our Flow-NRG directly trains noise-aware reward models to obtain accurate gradients.
Our work applies the DPO algorithm~\cite{rafailov2024direct, wallace2024diffusion} to flow matching in video generation. Concurrent work~\cite{domingo2024adjoint} also explores similar things in image generation. However, they reports worse performance for the DPO-aligned model compared to the unaligned one. We argue that the originally derived Flow-DPO algorithm imposes a stronger KL constraint at lower noise steps, resulting in suboptimal performance. In contrast, using a constant KL constraint significantly improves performance on certain tasks.
Some prior work explores aligning video generation models using direct backpropagation with differentiable rewards~\cite{yuan2024instructvideo, li2024t2v, li2024t2v1, prabhudesai2024video}, often relying on image reward models~\cite{kirstain2023pick, wu2023humanv2}. However, these approaches cannot be directly applied to modern T2V generation with a VLM-based video reward and large VAE decoders, as they exceed the memory limits of existing GPU setups, requiring specialized engineering techniques to handle.

\paragraph{Discussion with Concurrent Video Alignment Works.}
Several concurrent works also explore aligning video generation models using feedback. ~\citet{furuta2024improving} derives a unified probabilistic objective for offline RL fine-tuning of text-to-video models for DPO and RWR. VideoDPO~\cite{liu2024videodpo} introduces a re-weighting factor on the Diffusion-DPO loss to adjust the impact of each pair, encouraging the model to learn more effectively from pairs with clearer distinctions. VisionReward~\cite{xu2024visionreward} ensures that all dimensions of the chosen data outperform those of the rejected data, addressing the issue of confounding factors in preference data. OnlineVPO~\cite{zhang2024onlinevpo} presents an online DPO algorithm to tackle off-policy optimization. LIFT~\cite{wang2024lift} proposes applying RWR on synthesized datasets while simultaneously performing supervised learning on real video-text datasets, as synthesized videos often suffer from low temporal consistency.
All of these works use Diffusion-DPO or Diffusion-RWR and focus on aligning diffusion-based video generation models, where the videos in the preference datasets are either generated by earlier open-source models or use image reward models directly. In contrast, our work explores alignment techniques for advanced flow-based video generation. We extend Diffusion-DPO into Flow-DPO, but our derivation reveals that the parameter \(\beta\) (the KL divergence constraint) in Flow-DPO is timestep-dependent, which leads to suboptimal performance on certain tasks. However, fixing \(\beta\) resolves this issue.
SPO~\cite{liang2024step} also assumes that the preference order between pairs of images remains consistent when adding the same noise, and constructs win-lose pairs for noised images, proposing step-aware preference optimization. Our work differs from SPO in that while SPO focuses on improving the training method DPO, our Flow-NRG specifically targets training noise-ware reward model on noised videos.
\section{Details of the Derivation}\label{app:sec:math}
\subsection{Relation beween Velocity and Noise}
 \begin{lemma}\label{lemma:x1_with_v}
    Let \(X_0 \sim q\) be a real data sample drawn from the true data distribution and \(X_1 \sim p\) be a noise sample, where \(X_0, X_1 \in \mathbb{R}^d\). 
    Define \(\vv_t(\vx_t \mid X_0,X_1)\) to be the conditional velocity field specified by a Rectified Flow~\cite{liu2022flow}, and let \(\vv_t^\text{pred}(\vx_t)\) be the predicted \emph{marginal} velocity field. 
    Then the L2 error of the noise prediction is related to the L2 error of the velocity field prediction by
    \begin{equation}\label{eq:lemma}
        \|X_1 - X_1^\text{pred}(\vx_t, t)\|^2 
        \;=\; (1-t)^2 \,\bigl\|\vv_t(\vx_t \mid X_0, X_1) - \vv_t^\text{pred}(\vx_t)\bigr\|^2.
    \end{equation}
\end{lemma}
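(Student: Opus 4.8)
The plan is to reduce the claimed identity to a single affine change of variables between the noise endpoint $X_1$ and the conditional velocity $\vv_t$ at a fixed noisy input $\vx_t$. First I would combine the Rectified-Flow interpolation $\vx_t = (1-t)X_0 + t X_1$ with the definition of the conditional velocity $\vv_t(\vx_t \mid X_0, X_1) = X_1 - X_0$ and eliminate $X_0$. Writing $X_0 = X_1 - \vv_t$ and substituting gives $\vx_t = (1-t)(X_1 - \vv_t) + t X_1 = X_1 - (1-t)\,\vv_t$, hence
\[
X_1 = \vx_t + (1-t)\,\vv_t(\vx_t \mid X_0, X_1).
\]
So, conditioned on $(\vx_t, t)$, the noise sample $X_1$ and the velocity $\vv_t$ are in one-to-one affine correspondence with slope $(1-t)$.

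Next I would observe that the same relation is forced on the model's outputs: a velocity predictor $\vv_t^\text{pred}(\vx_t)$ and the associated noise predictor $X_1^\text{pred}(\vx_t, t)$ are by construction two parameterizations of the same quantity, linked through exactly the interpolation formula above, so that $X_1^\text{pred}(\vx_t, t) = \vx_t + (1-t)\,\vv_t^\text{pred}(\vx_t)$. Subtracting this from the previous display, the $\vx_t$ terms cancel and
\[
X_1 - X_1^\text{pred}(\vx_t, t) = (1-t)\bigl(\vv_t(\vx_t \mid X_0, X_1) - \vv_t^\text{pred}(\vx_t)\bigr).
\]
Taking squared Euclidean norms of both sides yields $\|X_1 - X_1^\text{pred}(\vx_t,t)\|^2 = (1-t)^2\,\|\vv_t - \vv_t^\text{pred}\|^2$, which is precisely \eqref{eq:lemma}.

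The computation is elementary, so the only point that genuinely needs care is the bookkeeping in the second step: making explicit that the ``predicted noise'' and the ``predicted velocity'' are not independent objects but are tied by the same deterministic map $\vx_1 \mapsto \vx_t + (1-t)\vv$ at each $(\vx_t, t)$; once this is granted, the factor $(1-t)$, and hence $(1-t)^2$, simply falls out of linearity. I would also note in passing that the endpoint $t = 1$ is non-problematic: both sides vanish identically there, so the identity holds trivially. This identity is what is then substituted (with $X_1 \to \v\epsilon^{*}$, $\vv_t \to \vv^{*}$) to pass from \eqref{eq:diffusion_dpo} to the Flow-DPO objective \eqref{eq:flow_dpo}.
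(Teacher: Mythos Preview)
Your proof is correct and follows essentially the same route as the paper: both derive the affine relation $X_1 = \vx_t + (1-t)\,\vv_t$ at fixed $\vx_t$ and carry it over to the predictors, so the $(1-t)^2$ factor falls out by linearity. The only cosmetic difference is that the paper justifies the link between $X_1^{\text{pred}}$ and $\vv_t^{\text{pred}}$ by writing the marginal velocity as the conditional expectation $\vv_t(\vx_t)=\bigl(\mathbb{E}[X_1\mid X_t=\vx_t]-\vx_t\bigr)/(1-t)$ and identifying both predictors with their optimal (posterior-mean) values, whereas you simply invoke consistency of parameterizations; the algebra is identical.
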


\begin{proof}
    The Rectified Flow is a time-dependent flow \(\psi : [0,1] \times \mathbb{R}^d \to \mathbb{R}^d\) for \(t \in [0,1]\), defined by
    \[
        \psi(X_0, X_1) 
        \;=\; (1-t)\,X_0 \;+\; t\,X_1.
    \]
    By definition, the \emph{marginal} velocity field \(\vv_t(\vx_t)\) is
    \begin{align}\label{eq:marginal_v_x1}
        \vv_t(\vx_t)
        \;=\;& \mathbb{E}\bigl[\vv_t(X_t \mid X_0,X_1)\,\bigm|\,X_t = \vx_t\bigr] \\
        =\;& \mathbb{E}\bigl[\dot{\psi}(X_t \mid X_0,X_1)\,\bigm|\,X_t = \vx_t\bigr] \notag \\
        =\;& \mathbb{E}\bigl[X_1 - X_0 \,\bigm|\, X_t = \vx_t\bigr] \notag \\
        =\;& \mathbb{E}\Bigl[\frac{\,X_1 - \bigl((1-t)\,X_0 + t\,X_1\bigr)\,}{\,1-t\,} \,\Bigm|\, X_t = \vx_t\Bigr] \notag \\
        =\;& \mathbb{E}\Bigl[\frac{\,X_1 - \vx_t\,}{\,1-t\,}\,\Bigm|\, X_t = \vx_t\Bigr] \notag \\
        =\;& \frac{\mathbb{E}[\,X_1 \mid X_t = \vx_t\,] - \vx_t}{\,1-t\,}. \notag
    \end{align}
    Meanwhile, the \emph{conditional} velocity field \(\vv_t(\vx_t \mid X_0,X_1)\) is given by
    \begin{equation}\label{eq:conditional_v_x1}
        \vv_t(\vx_t \mid X_0,X_1)
        \;=\; \frac{\,X_1 - \vx_t\,}{\,1-t\,}.
    \end{equation}
    Substituting \eqref{eq:conditional_v_x1} into \eqref{eq:marginal_v_x1}, we obtain
    \[
        \bigl\|\vv_t(\vx_t \mid X_0,X_1) \;-\; \vv_t(\vx_t)\bigr\|^2
        \;=\; \frac{\bigl\| X_1 - \mathbb{E}[\,X_1 \mid X_t = \vx_t\,]\bigr\|^2}{(1-t)^2}.
    \]
    Assuming that
    \[
        X_1^{\text{pred}}(\vx_t,t) \;=\; \mathbb{E}[\,X_1 \mid X_t = \vx_t\,]
        \quad\text{and}\quad
        \vv_t^\text{pred}(\vx_t) \;=\; \vv_t(\vx_t).
    \]
    Consequently,
    \[
        \bigl\|X_1 - X_1^{\text{pred}}(\vx_t,t)\bigr\|^2
        \;=\; (1-t)^2 \,\bigl\|\vv_t(\vx_t \mid X_0,X_1) \;-\; \vv_t^\text{pred}(\vx_t)\bigr\|^2.
    \]
\end{proof}

\subsection{Reward as Classifier Guidance}\label{lemma:reward_guidance}
We begin by citing a lemma from the Guided Flows paper~\cite{zheng2023guided}.
\begin{lemma}\label{lemma:p_with_v}
    Let $p_t(\vx|\vy)$ be a Gaussian Path defined by a scheduler  $(\alpha_t, \sigma_t)$, i.e., \(p_t(\vx|x_0) = \mathcal{N}(\vx|\alpha_tx_0,\sigma_t^2I)\) where \(\vy\in \mathbb{R}^k\) is a conditioning variable, then its generating velocity field $\vv_t(\vx|\vy)$ is related to the score function $\nabla\log p_t(\vx|\vy)$ by
    \begin{equation}\label{e:lem_v_t}
        \vv_t(\vx|\vy) = a_t \vx + b_t \nabla\log p_t(\vx|\vy),
    \end{equation}
    where 
    \begin{equation}
        a_t=\frac{\dot{\alpha}_t}{\alpha_t}, \qquad b_t= (\dot{\alpha}_t\sigma_t-\alpha_t \dot{\sigma}_t)\frac{\sigma_t}{\alpha_t}.
    \end{equation}    
\end{lemma}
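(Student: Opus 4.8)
The plan is to derive the identity from the definition of the marginal velocity field together with Tweedie's formula for Gaussian corruptions; the computation parallels the proof of Lemma~\ref{lemma:x1_with_v}, but now the score function enters through the posterior mean of the clean sample. Write $X_t=\alpha_t X_0+\sigma_t\varepsilon$ with $X_0\sim p_0(\cdot\mid\vy)$ and $\varepsilon\sim\mathcal{N}(0,I)$ independent, so that $X_t\mid X_0\sim\mathcal{N}(\alpha_t X_0,\sigma_t^2 I)$ and $X_t\mid\vy$ has density $p_t(\cdot\mid\vy)$. Differentiating the conditional flow $\psi_t(x_0,\varepsilon)=\alpha_t x_0+\sigma_t\varepsilon$ in $t$ and eliminating $\varepsilon=(\vx-\alpha_t x_0)/\sigma_t$ gives the conditional velocity field
\[
u_t(\vx\mid x_0)=\dot{\alpha}_t x_0+\dot{\sigma}_t\,\frac{\vx-\alpha_t x_0}{\sigma_t}
=\frac{\dot{\sigma}_t}{\sigma_t}\,\vx+\Bigl(\dot{\alpha}_t-\frac{\alpha_t\dot{\sigma}_t}{\sigma_t}\Bigr)x_0 .
\]

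First I would recall the standard marginalization identity of flow matching: the pair $\bigl(p_t(\cdot\mid\vy),\vv_t(\cdot\mid\vy)\bigr)$ with $\vv_t(\vx\mid\vy)=\mathbb{E}\bigl[u_t(\vx\mid X_0)\mid X_t=\vx,\vy\bigr]$ satisfies the continuity equation $\partial_t p_t(\cdot\mid\vy)+\nabla\!\cdot\!\bigl(p_t(\cdot\mid\vy)\,\vv_t(\cdot\mid\vy)\bigr)=0$, hence $\vv_t(\cdot\mid\vy)$ is a generating velocity field for $p_t(\cdot\mid\vy)$; this follows by differentiating $p_t(\vx\mid\vy)=\int p_t(\vx\mid x_0)\,p_0(x_0\mid\vy)\,dx_0$ under the integral sign and using that each $u_t(\cdot\mid x_0)$ generates $p_t(\cdot\mid x_0)$. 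Plugging the display above into this conditional expectation reduces the problem to computing the posterior mean $\mathbb{E}[X_0\mid X_t=\vx,\vy]$.

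Next I would invoke Tweedie's formula. Writing $p_t(\vx\mid\vy)=\int\mathcal{N}(\vx\mid\alpha_t x_0,\sigma_t^2 I)\,p_0(x_0\mid\vy)\,dx_0$ and using $\nabla_\vx\mathcal{N}(\vx\mid\alpha_t x_0,\sigma_t^2 I)=-\sigma_t^{-2}(\vx-\alpha_t x_0)\,\mathcal{N}(\vx\mid\alpha_t x_0,\sigma_t^2 I)$, differentiation under the integral yields
\[
\nabla\log p_t(\vx\mid\vy)=\frac{\alpha_t\,\mathbb{E}[X_0\mid X_t=\vx,\vy]-\vx}{\sigma_t^2},
\qquad\text{equivalently}\qquad
\mathbb{E}[X_0\mid X_t=\vx,\vy]=\frac{\vx+\sigma_t^2\,\nabla\log p_t(\vx\mid\vy)}{\alpha_t}.
\]
Substituting this into the formula for $\vv_t(\vx\mid\vy)$, the $\vx$-terms combine to $\frac{\dot{\sigma}_t}{\sigma_t}\vx+\frac{1}{\alpha_t}\bigl(\dot{\alpha}_t-\frac{\alpha_t\dot{\sigma}_t}{\sigma_t}\bigr)\vx=\frac{\dot{\alpha}_t}{\alpha_t}\vx$, and the score term picks up coefficient $\frac{\sigma_t^2}{\alpha_t}\bigl(\dot{\alpha}_t-\frac{\alpha_t\dot{\sigma}_t}{\sigma_t}\bigr)=\frac{\sigma_t}{\alpha_t}\bigl(\dot{\alpha}_t\sigma_t-\alpha_t\dot{\sigma}_t\bigr)$, which are exactly $a_t$ and $b_t$.

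The bulk of the argument is bookkeeping with Gaussians, so the only genuinely delicate point is justifying the two interchanges of differentiation and integration — once to show $\vv_t(\cdot\mid\vy)$ solves the continuity equation, and once for Tweedie's formula. Under the usual mild regularity assumptions on $p_0(\cdot\mid\vy)$ (finite second moment, and $\alpha_t,\sigma_t$ smooth with $\sigma_t>0$ on the relevant interval) these are routine dominated-convergence arguments; since the statement is quoted verbatim from \cite{zheng2023guided}, I would cite their proof for these measure-theoretic details and present the algebraic derivation above for completeness.
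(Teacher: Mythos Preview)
Your derivation is correct: expressing the conditional velocity field $u_t(\vx\mid x_0)$ from the scheduler, marginalizing via $\vv_t(\vx\mid\vy)=\mathbb{E}[u_t(\vx\mid X_0)\mid X_t=\vx,\vy]$, and then replacing the posterior mean $\mathbb{E}[X_0\mid X_t=\vx,\vy]$ by Tweedie's formula is exactly the standard route, and your algebra checks out for both $a_t$ and $b_t$.

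As for comparison with the paper: there is nothing to compare. The paper does not give its own proof of this lemma at all --- it is merely quoted from the Guided Flows paper \cite{zheng2023guided}, with a pointer ``See Appendix A.61 of the Guided Flows paper for detailed derivations.'' Your write-up therefore goes beyond what the paper offers, and your closing remark (presenting the algebra while citing \cite{zheng2023guided} for the dominated-convergence justifications) is already aligned with how the paper handles it.
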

Seed Appendix A.61 of the Guided Flows paper~\cite{zheng2023guided} for detailed derivations.

If we define
\begin{align}
\tilde{v}_t(\vx_t|\vy) &= \vv_t(\vx_t|\vy) + w[a_t \vx_t + b_t \nabla r(\vx_t, \vy) - \vv_t(\vx_t|\vy)] \nonumber \\
&= \vv_t(\vx_t|\vy) + w[a_t \vx_t + b_t \nabla \log \exp(r(\vx_t, \vy)) - \vv_t(\vx_t|\vy)] \nonumber\\
&= (1-w)\vv_t(\vx_t|\vy)+ w[a_t \vx_t + b_t \nabla \log \exp(r(\vx_t, \vy))] \nonumber \\
&= a_t \vx_t + b_t\nabla
\left[
   (1-w) \log p_t(\vx_t|\vy)+w \log\exp(r(\vx_t, \vy))
\right] \nonumber \\
&=a_t \vx_t + b_t\nabla \log \tilde{p_t}(\vx_t|\vy)  \label{eq:guide_v1}   
\end{align}
where $\tilde{p_t}(\vx_t|\vy) \propto p_t(\vx_t|\vy)^{1-w}[\exp(r(\vx_t, \vy))]^w$. We change our goal from sampling from the distribution ${p_t}(\vx_{t}|\vy)$ to sampling from the distribution $\tilde{p_t}(\vx_{t}|\vy)$.

We note that this analysis shows that Reward Guided Flows are guaranteed to sample from $\tilde{q}(\cdot|\vy)$ at time $t = 1$ if the probability path $\tilde{p}_t(\cdot|\vy)$ is close to the marginal probability path $\int p_t(\cdot|\vx_1)\tilde{q}(\vx_1|\vy)dx_1$, but it is not clear to what extent this assumption holds in practice. This also mens that $\tilde{p_t}(\vx_{t}|\vy)$ is also a marginal gaussian path defined by $p_t(\vx | \vx_1) = \mathcal{N}(\vx | \alpha_t \vx_1, \sigma_t^2 I)$.

Simlirly, if we define
\begin{align}
\tilde{v}_t(\vx_t|\vy) &= \vv_t(\vx_t|\vy)+ wb_t \nabla r(\vx_t, \vy) \label{eq:guide_v2} \nonumber \\
&= \vv_t(\vx_t|\vy)+ wb_t \nabla \log \exp(r(\vx_t, \vy)) \nonumber \\
&= a_t \vx_t + b_t\nabla
\left[\log p_t(\vx_t|\vy)+w \log\exp(r(\vx_t, \vy))
\right] \nonumber \\
&=a_t \vx_t + b_t\nabla \log \tilde{p_t}(\vx_{t}|\vy) 
\end{align}
where $\tilde{p_t}(\vx_{t}|\vy) \propto p_t(\vx_t|\vy)[\exp(r(\vx_t, \vy))]^w$.
We change our goal from sampling from the distribution ${p_t}(\vx_{t}|\vy)$ to sampling from the distribution $\tilde{p_t}(\vx_{t}|\vy)$.

\paragraph{Reward Guidance for Rectified Flow.}
Rectified Flow~\cite{liu2022flow} is also a Gaussian path defined by 
\begin{equation}
    \vx_t = (1-t)\vx_0+t\vx_1
\end{equation}
where \(\vx_1\) is from normal Gaussian distribution.
Then \[p_t(\vx\mid \vx_0) = \mathcal{N}(\vx|(1-t)\vx_0,t^2I)\]
where \(\alpha_t=1-t, \sigma_t=t\). Then we get
\begin{equation}
    a_t=\frac{1}{t-1}, b_t=\frac{t}{t-1}.
\end{equation}

Eq.~\ref{eq:guide_v1} becomes 
\begin{equation}
    \tilde{v}_t(\vx_t|\vy) = \vv_t(\vx_t|\vy) + w[\frac{1}{1-t} \vx_t + \frac{t}{1-t} \nabla r(\vx_t, \vy) + \vv_t(\vx_t|\vy)].
\end{equation}

Eq.~\ref{eq:guide_v2} becomes 
\begin{equation}
    \tilde{v}_t(\vx_t|\vy) = \vv_t(\vx_t|\vy)- w\frac{t}{1-t} \nabla r(\vx_t, \vy).
\end{equation}

We use Eq.~\ref{eq:guide_v1} or Eq.~\ref{eq:guide_v2} to guide inference through reward model.

\clearpage
\section{Pseudo-code of Flow-DPO and Flow-NRG}\label{app:sec:pesudocode}
The Pytorch-style implementation of the Flow-DPO loss (Eq.~\ref{eq:flow_dpo}) is shown below:

\begin{lstlisting}[style=pythonstyle]
def loss(model, ref_model, x_w, x_l, c, beta):
    """
    # model: Flow model that takes prompt condition c and timestep as inputs and predicts velocity
    # ref_model: Frozen initialization of the model
    # x_w: Preferred Image (latents in this work)
    # x_l: Non-Preferred Image (latents in this work)
    # c: Conditioning (text in this work)
    # beta: Regularization Parameter
    """    
    timestep = torch.rand(len(x_w))
    noise = torch.randn_like(x_w)
    noisy_x_w = (1 - timestep) * x_w + timestep * noise
    noisy_x_l = (1 - timestep) * x_l + timestep * noise
    velocity_w_pred = model(noisy_x_w, c, timestep)
    velocity_l_pred = model(noisy_x_l, c, timestep)
    velocity_ref_w_pred = ref_model(noisy_x_w, c, timestep)
    velocity_ref_l_pred = ref_model(noisy_x_l, c, timestep)
    velocity_w = noise - x_w
    velocity_l = noise - x_l

    model_w_err = (velocity_w_pred - velocity_w).norm().pow(2)
    model_l_err = (velocity_l_pred - velocity_l).norm().pow(2)
    ref_w_err = (velocity_ref_w_pred - velocity_w).norm().pow(2)
    ref_l_err = (velocity_ref_l_pred - velocity_l).norm().pow(2)
    w_diff = model_w_err - ref_w_err
    l_diff = model_l_err - ref_l_err
    inside_term = -0.5 * beta * (w_diff - l_diff)
    loss = -1 * log(sigmoid(inside_term))
    return loss
\end{lstlisting}

The Pytorch-style implementation of the reward guidance (Eq.~\ref{eq:reward_guidance}) is shown below:
\begin{lstlisting}[style=pythonstyle]
def reward_guidance(model, reward_model, prompt_embeds, latents, timesteps, reward_weight, rg_scale, cfg_scale):
    """
    # model: Flow model that predicts velocity given latents and conditions
    # reward_model: Model that evaluates the quality of latents based on prompt embeddings and timestep
    # prompt_embeds: Embeddings of the text prompts
    # latents: Initial noise
    # timesteps: Sequence of timesteps
    # reward_weight: weighting coefficient of multi-dimensional rewards
    # rg_scale: scale factor for reward guidance
    # cfg_scale: scale factor for classifier free guidance
    """
    dts = timesteps[:-1] - timesteps[1:]
    for i, t in enumerate(timesteps):
        v_pred = model(latents, prompt_embeds, t)
        if cfg_scale != 1.0:
            v_pred_uncond = model(latents, None, t)  # unconditional prediction
            v_pred = v_pred_uncond + cfg_scale * (v_pred - v_pred_uncond)
        latents = latents.detach().requires_grad_(True)
        reward = reward_model(latents, prompt_embeds, t)
        reward = (reward * reward_weight).sum()
        reward_guidance = torch.autograd.grad(reward, latents)
        if t != 1:
            v_pred = v_pred - rg_scale * t / (1 - t) * reward_guidance 
        latents = latents - dts[i] * v_pred
    return latents
\end{lstlisting}
\section{Architecture of Internal Video Diffusion Model}\label{app:architecture_vdm}

Our work employs an internal text-to-video foundation model, which is a Transformer-based latent diffusion model~\cite{peebles2023scalable} with 1B parameters. The model integrates a 3D VAE to encode video data into a latent space, alongside a Transformer-based video diffusion model. Each Transformer block is instantiated as a sequence of 2D spatial-attention, 3D self-attention, cross-attention, and FFN modules. Importantly, the model is trained under Flow Matching framework. An illustration of the model architecture is provided in Fig~\ref{fig:arch_vdm}.

\begin{figure}[h]
    \centering
    \includegraphics[width=\linewidth]{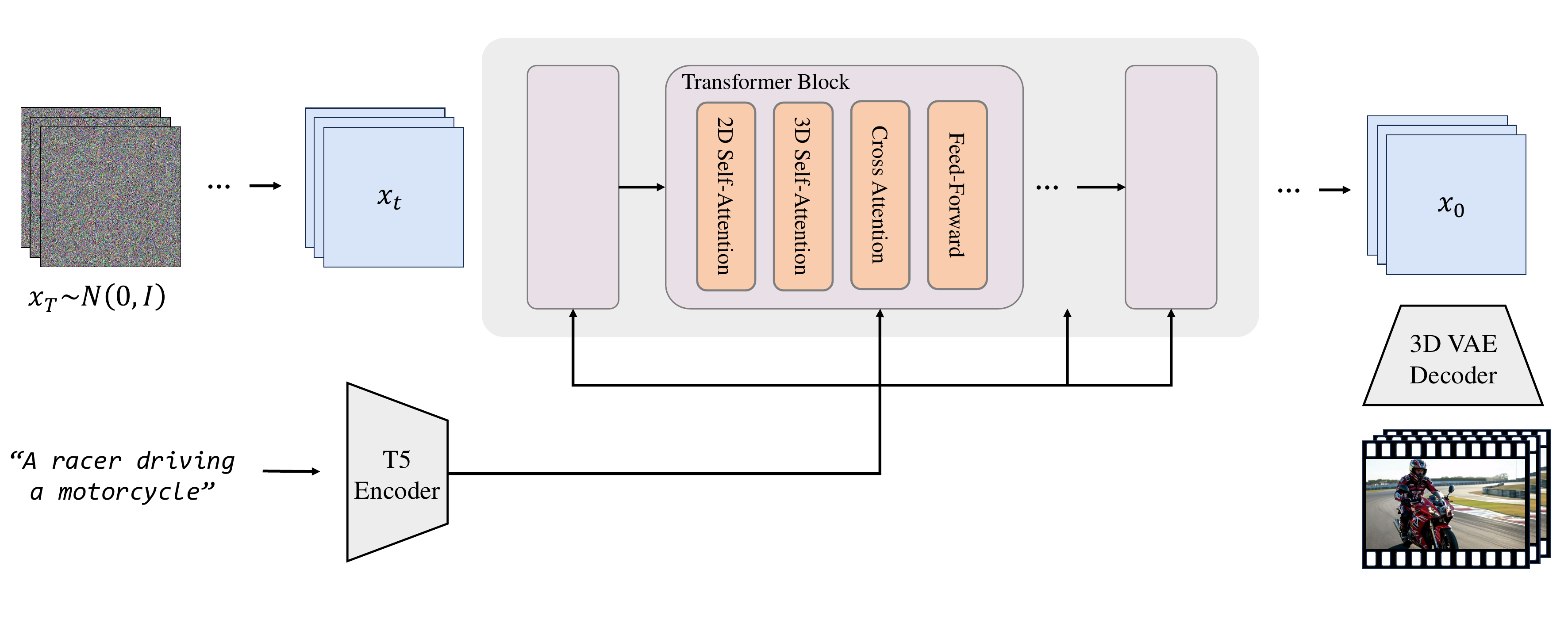}
    \caption{Architecture of Our Internal Video Latent Diffusion Model Backbone.}
    \label{fig:arch_vdm}
\end{figure}
\section{Ablation Study of Our Reward Model}\label{app:sec:extened-exp-results}
% \paragraph{Ablation Study on Key Design Choices of Our Reward Model.}
\paragraph{Ablation of \ourrm} To further validate the key design choices of our reward model discussed in Sec.\ref{sec:method_rm}, we conduct an ablation study on the evaluation benchmark, providing a quantitative supplement to our analysis in Tab.~\ref{tab:ablation_rm}.
We compare three reward model variants: \textit{regression-style}, \textit{Bradley-Terry}, and \textit{Bradley-Terry-With-Tied}. The BT model slightly outperforms the regression-style model, likely due to the advantages of pairwise annotations, which better capture ranking relationships and are more robust to annotation noise. The BTT model matches the BT model on ties-excluded accuracy but significantly improves ties-included accuracy, as its explicit handling of tied pairs helps it learn a more robust decision boundary, capturing neutral relationships in ambiguous cases.
Additionally, we find that using separate tokens for each reward attribute, instead of a shared last token further improves performance. This design better represents distinct reward aspects, enhancing alignment with human preferences.
\begin{table*}[h]
\vspace{-2mm}
\centering
\caption{Ablation study on reward model type and seprate tokens. \textbf{Bold}: Best Performance. }
\label{tab:ablation_rm}
\resizebox{\linewidth}{!}{
  \begin{tabular}{ccc|cc|cccccccc} % {@{}lc@{}}
    \toprule
     \multirow{3}{*}{Variants} & \multirow{3}{*}{RM Type} & \multirow{3}{*}{Separate Tokens}  & \multicolumn{2}{c}{\textbf{GenAI-Bench}} & \multicolumn{8}{|c}{\textbf{VideoGen-RewardBench}} \\
    \cmidrule(lr){4-5}\cmidrule(lr){6-13}
     & & & \multicolumn{2}{c}{Overall Accuracy} & \multicolumn{2}{|c}{Overall Accuracy} & \multicolumn{2}{c}{VQ Accuracy} & \multicolumn{2}{c}{MQ Accuracy} & \multicolumn{2}{c}{TA Accuracy}   \\
    \cmidrule(lr){4-5}\cmidrule(lr){6-7}\cmidrule(lr){8-9}\cmidrule(lr){10-11}\cmidrule(lr){12-13}
     & & & w/ Ties & w/o Ties & w/ Ties & w/o Ties & w/ Ties & w/o Ties & w/ Ties & w/o Ties & w/ Ties & w/o Ties   \\
    \midrule
    I   & Regression &   & 48.28 & 71.13 & 58.39 & 70.16 & 54.23 & 73.61 & 61.16 & 65.56 & 52.60 & 70.95\\
    II  & BT         &   & 47.74 & 71.21 & 61.22 & 72.58 & 52.33 & \textbf{77.10} & 59.43 & 73.50 & 53.06 & 71.62\\
    III & BTT        &   & 48.27 & 70.89 & \textbf{61.50} & 73.39 & \textbf{60.52} & 76.31 & 64.64 & 72.40 & 53.55 & 72.12 \\
    \rowcolor[gray]{0.9}
    IV  & BTT        & \checkmark & \textbf{49.41} & \textbf{72.89} & 61.26 & \textbf{73.59} & 59.68 & 75.66 & \textbf{66.03} & \textbf{74.70} & \textbf{53.80} & \textbf{72.20} \\
    % \rowcolor[gray]{0.9}
    % V(Ours)   & BTT  & \checkmark & \checkmark & - & - & - & - & - & - & - & -\\
    \bottomrule
  \end{tabular}
}
\vspace{-3mm}
\end{table*}

\paragraph{Ablation on Noisy-Latent Reward Training.} 
We apply reward guidance using only MQ rewards on TA-Hard prompts. When trained with noised latents, the reward model successfully guides generation, yielding an MQ win rate of 74.6. By contrast, the model trained on clean latents fails to offer meaningful guidance for intermediate noised latents, achieving only a 38.6 win rate and underperforming even the unguided baseline. Following \citet{yu2023freedom}, we also try directly predicting \(\vx_0\) from \(\vx_t\) and then backpropagating via \(r(\vx_0, y)\). However, this method produces unusable videos without meaningful content.
\section{Statistics of Training Data}
\label{sec:app:data_statistics}
In Tab.~\ref{tab:dataset_stat} and Fig.~\ref{fig:dataset_analysis}, we provide a comprehensive statistics of our 182k-sized training dataset.
\begin{figure*}[h]
    \centering
    \includegraphics[width=\linewidth]{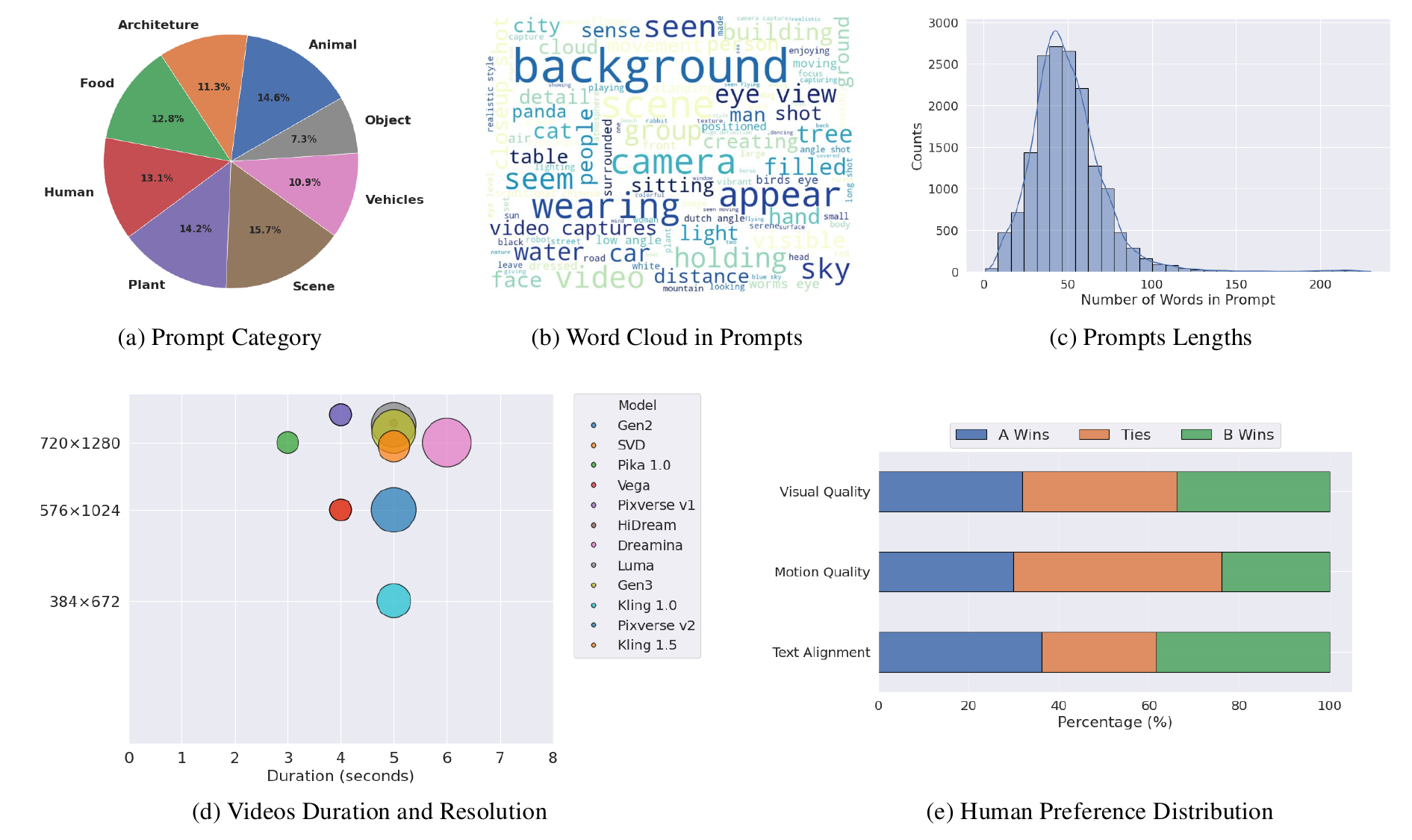}
    \vspace{-2em}
    \caption{Statistics of our training data.}
    \vspace{-3mm}
    \label{fig:dataset_analysis}
\end{figure*}

Previous reward models (e.g., VideoScore~\cite{he2024videoscore}, LIFT~\cite{wang2024lift}) were trained on datasets with mostly low resolutions ($\le$480p) and short durations (2s), whereas ours include recent T2V models with higher resolutions ($\ge$720p) and longer clips (3--6s). The dataset statistics are shown in Tab.~\ref{app:tab:video_datasets}.

\begin{table}[h]
\centering
\caption{Comparison of video preference datasets in terms of spatial resolution and temporal duration.}
\begin{tabular}{lcc}
\toprule
\textbf{Dataset} & \textbf{Resolution} & \textbf{Duration} \\
\midrule
\textit{Existing Preference Datasets} & & \\
VideoFeedback~\cite{he2024videoscore} & $256\times256$ -- $576\times1024$ & 1s -- 3s \\
VideoDPO~\cite{liu2024videodpo}        & $320\times512$                    & 2s \\
LiFT-HRA~\cite{wang2024lift}      & $480\times720$                    & 6s \\
\midrule
\textit{Our Datasets} & & \\
Ours                  & $384\times672$ -- $768\times1408$ & 3s -- 6s \\
\bottomrule
\end{tabular}
\label{app:tab:video_datasets}
\end{table}

\section{Details of Human Annotation}
\label{sec:appendix_anno_details}

We provide additional details regarding the annotation process.
First, annotators are provided with detailed scoring guidelines and undergo training sessions to ensure they fully understand the criteria; Tab~\ref{tab:anno_guideline} summarizes the key points for each dimension as outlined in the guidelines. Reference examples are provided to help annotators better grasp the evaluation standards.
Each sample is evaluated by three independent annotators. For training and validation sets, annotators provide pairwise preference annotations and pointwise scores for Visual Quality~(VQ), Motion Quality~(MQ), and Tempotal Alignment~(TA). For VideoGen-RewardBench, annotators evaluate the same three aspects along with an additional Overall Quality, using only pairwise preferences.
In cases where the annotators disagree on a sample, an additional reviewer is tasked with resolving the discrepancy. The final label is determined on the basis of the reviewer's evaluation, ensuring consistency across the dataset. Furthermore, during the annotation process, all annotators are instructed to flag any content deemed unsafe. Videos identified as unsafe are excluded from the dataset, ensuring the safety of the data used for training and evaluation.

\begin{table}[!ht]
    \centering
    \caption{Key points summary outlined in annotation guidelines for each evaluation dimension.}
    \small
    \begin{tabular}{c|c}
        \toprule
        Evaluation Dimension & Key Points Summary \\
        \midrule
        Visual Quality & \begin{tabular}{p{0.7\textwidth}}
        Considering the following dimensions introducted by \textbf{non-dynamic} factors: \\
        - \textbf{Image Reasonableness}: The image should be objectively reasonable. \\
        - \textbf{Clarity}: The image should be clear and visually sharp. \\
        - \textbf{Detail Richness}: The level of intricacy in the generation of details. \\
        - \textbf{Aesthetic Creativity}: The generated videos should be aesthetically pleasing. \\
        - \textbf{Safety}: The generated video should not contain any disturbing or uncomfortable content.
        \end{tabular} \\
        \midrule
        Motion Quality & \begin{tabular}{p{0.7\textwidth}}
        Considering the following dimensions in the \textbf{dynamic} process of the video: \\
        - \textbf{Dynamic Stability}: The continuity and stability between frames. \\
        - \textbf{Dynamic Reasonableness}: The dynamic movement should align with natural physical laws. \\
        - \textbf{Motion Aesthetic Quality}: The dynamic elements should be harmonious and not stiff.  \\
        - \textbf{Naturalness of Dynamic Fusion}: The edges should be clear during the dynamic process. \\
        - \textbf{Motion Clarity}: The motion should be easy to identify. \\
        - \textbf{Dynamic Degree}: The movement should be clear, avoiding still scenes.
        \end{tabular} \\
        \midrule
        Text Alignment & \begin{tabular}{p{0.7\textwidth}}
        Considering the \textbf{relevance} to the input text prompt description. \\
        - \textbf{Subject Relevance} Relevance to the described subject characteristics and subject details. \\
        - \textbf{Dynamic Information Relevance}: Relevance to actions and postures as described in the text. \\
        - \textbf{Environmental Relevance}: Relevance of the environment to the input text. \\
        - \textbf{Style Relevance}: Relevance to the style descriptions, if exists. \\
        - \textbf{Camera Movement Relevance}: Relevance to the camera descriptions, if exists. \\
        \end{tabular} \\
        % \midrule
        % Overall Quality & T \\
        \bottomrule
    \end{tabular}
    \label{tab:anno_guideline}
\end{table}

\section{Details of Reward Model Evaluation}
\label{app:sec:details_eval}

\subsection{Evaluation Benchmarks}
\label{sec:appendix_eval_benchmarks}
We evaluate our reward model using two benchmarks: GenAI-Bench~\cite{jiang2024genai} and VideoGen-RewardBench. \textbf{GenAI-Bench} is employed to assess the accuracy of the reward model in \textbf{evaluating pre-SOTA-era T2V models}, while \textbf{VideoGen-RewardBench} is used to evaluate its performance on \textbf{modern T2V models}. In this subsection, we describe both benchmarks, highlighting key parameters and differences in Fig~\ref{fig:eval_analysis} and Tab.~\ref{tab:dataset_bench_comp}. We also visualize the model coverage across the training sets of different baselines and the two evaluation benchmarks, as shown in the Fig~\ref{fig:model_coverage}.

\begin{figure*}[!t]
    \centering
    \includegraphics[width=\linewidth]{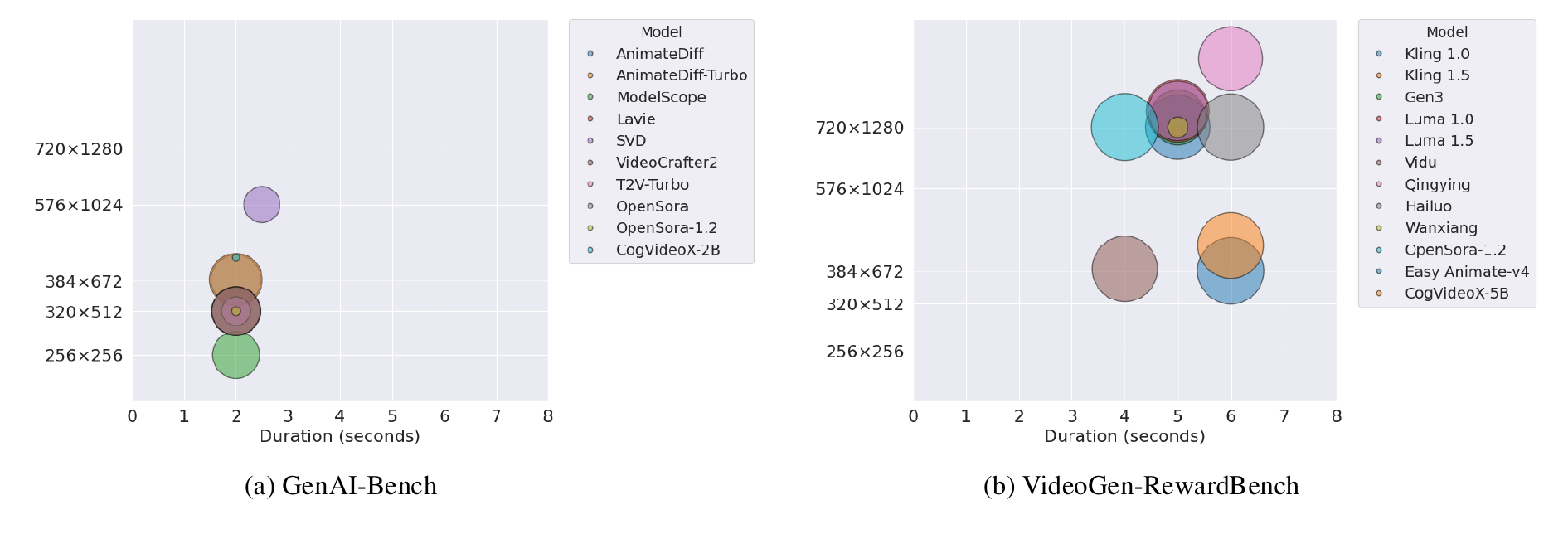}
    \vspace{-1em}
    \caption{Video Duration and Resolution in GenAI-Bench and VideoGen-Reward Bench}
    \label{fig:eval_analysis}
\end{figure*}

\begin{table*}[!t]
  \small
  \centering
  \caption{Comparison between GenAI-Bench and VideoGen-RewardBench. Eariler Models indicates that pre-Sora-era T2V models, and Modern Models indicates that T2V models after Sora.}
  \vspace{1em}
  \resizebox{\linewidth}{!}{
  \begin{tabular}{cccccccc}
    \toprule
    \multirow{2}{*}{Benchmark} & \multicolumn{5}{c}{\textbf{Prompts and Sampled Videos}} & \multicolumn{2}{c}{\textbf{Human Preference Annotations}} \\
    \cmidrule(lr){2-6}\cmidrule(lr){7-8}
     & \#Samples & \#Prompts & \makecell{\#Earlier Models} & \makecell{\#Modern Models} & \#Duration & \#Annotations & \#Dimensions  \\
    \midrule
    GenAI-Bench & 3784 & 508 & 7~(Open-Source) & 3~(Open-Source) & 2s - 2.5s & 1891 & 1 \vspace{0.4em}\\
    
    VideoGen-RewardBench & 4923 & 420 & 0 & \makecell{3~(Open-Source) \\ 9~(Close-Source)} &  4s - 6s & 26457 & \makecell{
    4
    } \\
    \bottomrule
  \end{tabular}
  }
\label{tab:dataset_bench_comp}
\end{table*}

\begin{figure*}[!t]
    \centering
    \hspace{2em}\includegraphics[width=0.85\linewidth]{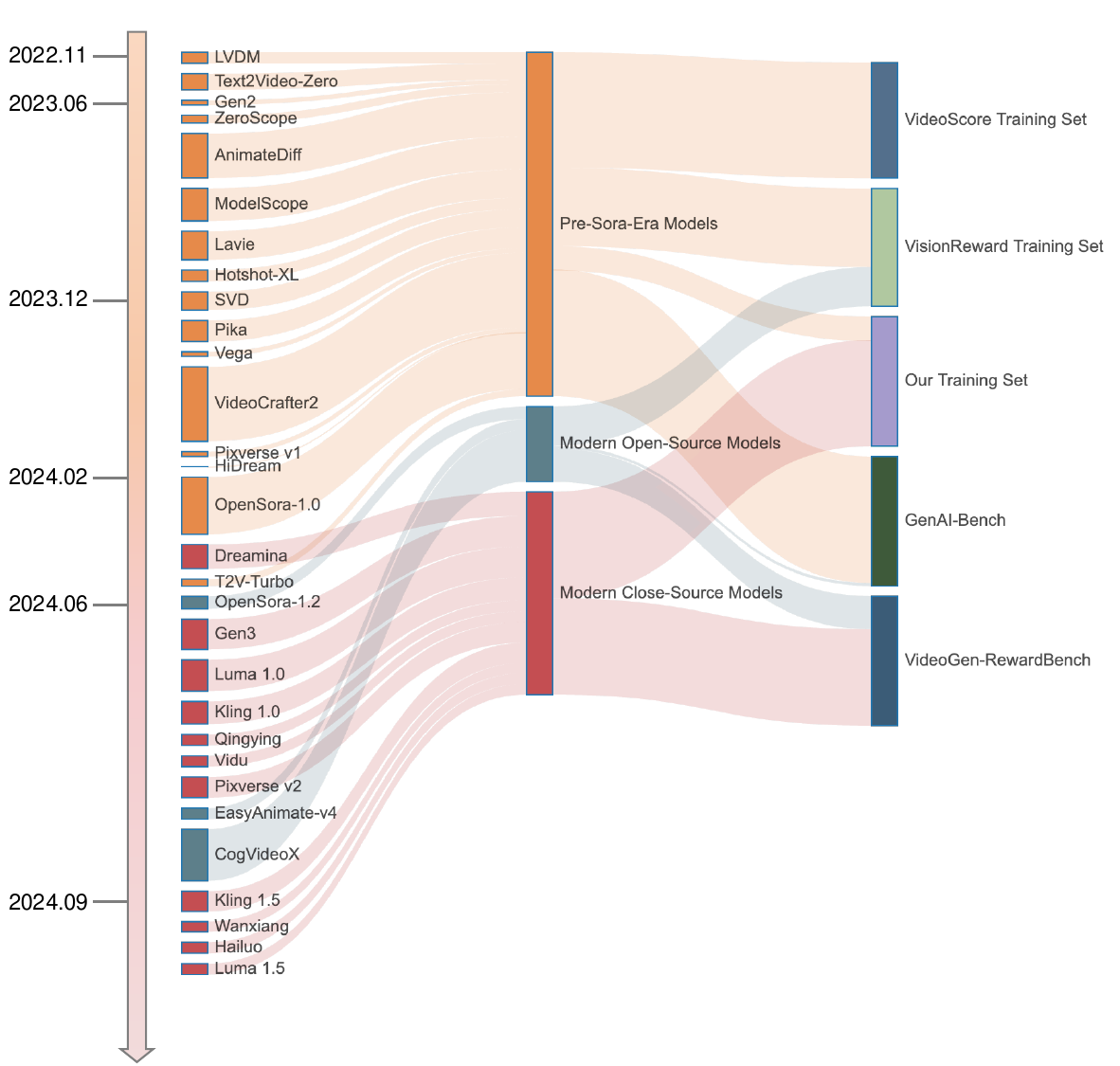}
    % \vspace{-3em}
    \caption{The model coverage across the training sets of different baselines and the two evaluation benchmarks. VideoScore, VisionReward, and GenAI-Bench primarily focus on pre-SoRA-era models, while our training set and VideoGen-RewardBench concentrate on state-of-the-art T2V models.}
    \label{fig:model_coverage}
\end{figure*}

\paragraph{GenAI-Bench} GenAI-Bench collects data from 6 pre-SOTA-era T2V models and 4 recent open-source T2V models. Human annotations for overall quality are obtained through GenAI-Arena, resulting in a benchmark consisting of 10 T2V models, 508 prompts, and 1.9k pairs. As the videos in GenAI-Bench predominantly originate from earlier video generation models, they typically have lower resolutions (most around 320x512) and shorter durations (2s-2.5s). We consider GenAI-Bench as a benchmark to assess the performance of reward models on early-generation T2V models.

\paragraph{VideoGen-RewardBench} VideoGen-Eval~\cite{zeng2024dawn} has open-sourced a dataset containing videos generated by 9 closed-source and 3 open-source models, designed to qualitatively visualize performance differences across models. Due to its high-quality data, broad coverage of the latest advancements in T2V models, and third-party nature, we leverage VideoGen-Eval to create a fair benchmark, VideoGen-RewardBench, for evaluating reward models' performance on modern T2V models.
We manually construct 26.5k video pairs and hire annotators to assess each pair's Visual Quality, Motion Quality, Text Alignment, and Overall Quality, providing preference labels. Ultimately, VideoGen-RewardBench includes 12 T2V models, 420 prompts, and 26.5k pairs. This benchmark represents human preferences for state-of-the-art models, with videos featuring higher resolutions (480x720 - 576x1024), longer durations (4s - 6s), and improved quality. We use VideoGen-RewardBench as the primary benchmark to evaluate reward models' performance on modern T2V models.

\subsection{Comparison Methods}
\label{sec:appendix_comp_methods}

\paragraph{Random} To eliminate the influence of metric calculations and benchmark distributions on our evaluation results, we introduce a special baseline: random scores. Specifically, for each triplet \textit{(prompt, video A, video B)}, we randomly sample $r_A$ and $r_B$ from a standard normal distribution, denoted as $r_A, r_B \sim \mathcal{N}(0, 1)$. We then calculate accuracy in the same manner as for the other models. The mathematical expectation of random scores for ties-excluded accuracy is 
$\mathbb{E}(acc) = \frac{1}{2}$, and the mathematical expectation of ties-included accuracy is $\mathbb{E}(acc) = \max (\frac{1}{3}, p(c = \text{"Ties"}))$.

\paragraph{VideoScore} VideoScore~\cite{he2024videoscore} adopts Mantis-Idefics2-8B~\cite{jiang2024mantis} as its base model and trains with point-wise data using MSE loss to model human preference scores. Since VideoScore predicts scores across multiple dimensions, and its dimension definitions differ from those in VideoGen-RewardBench, we compute both the overall accuracy and the dimension-specific~(VQ, MQ, TA) accuracy by averaging the scores of five dimensions when conduct evaluation GenAI-Bench and VideoGen-RewardBench, consistent with the evaluation strategy outlined in their paper.
The training data for VideoScore predominantly comes from pre-SOTA-era models, which explains its relatively better performance on GenAI-Bench, while accounts for the significant performance drop on VideoGen-RewardBench.

\paragraph{LiFT} LiFT~\cite{wang2024lift} adopts VILA-1.5-40B~\cite{lin2024vila} as its base model and employs a VLM-as-a-judge approach. The reward model is trained through instruction tuning with inputs, preference scores along with a critic. The model generates video scores and reasons through next-token prediction. LiFT evaluates videos across three dimensions: Video Fidelity, Motion Smoothness, and Semantic Consistency, which are similar to the dimensions defined in VideoGen-RewardBench. 
We calculate the overall accuracy using the average scores of these three dimensions and compute the dimension-specific accuracy using the corresponding dimensional scores. LiFT predicts discrete scores on a 1-3 scale, which often leads to ties in pairwise comparisons. When calculating accuracy without ties, we randomly convert the predicted tie labels to chosen/rejected with a 50\% probability, indicating that the model is unable to distinguish the relative quality between the two samples.

\paragraph{VisionReward} VisionReward~\cite{xu2024visionreward} adopts CogVLM2-Video-12B\cite{hong2024cogvlm2} as the base model and is trained to answer a set of judgment questions about the video with a binary "yes" or "no" response using cross-entropy loss. During inference, VisionReward evaluates 64 checklist items, providing converted into 1/0 scores. The final score is computed as the weighted average of these individual responses. We use the final score to calculate both the overall accuracy and the VQ/MQ/TA accuracy. VisionReward’s training data includes models from the pre-SOTA era models~\cite{chen2024videocrafter2} as well as recent open-source T2V models~\cite{zheng2024open,yang2024cogvideox}. It performs well on GenAI-Bench and demonstrates reasonable capabilities on VideoGen-RewardBench.

\paragraph{Our Reward Model} We adopts QWen2-VL-2B~\cite{wang2024qwen2} as the base model and train it with pair-wise data using BTT loss in Eq.~\ref{eq:btt_loss}.
Scores are normalized on the validation set and averaged to obtain overall scores for evaluation and optimization.
When evaluating on VideoGen-RewardBench, we sample videos at 2 FPS and a resolution of 448$\times$448, consistent with the training settings. We \textbf{calculate the overall accuracy by averaging the scores} across the three dimensions, and compute dimension-specific accuracies using the respective scores.
For GenAI-Bench, we sample videos at 2 FPS and a resolution of 256$\times$256, as the minimum resolution in GenAI-Bench is 256$\times$256. Given the significant disparities in visual quality and motion between the GenAI-Bench videos and our training data, \textbf{we utilize only the predicted TA scores to calculate the overall score}.

\subsection{Evaluation Metrics}
\label{app:sec:eval_metrics}

Similarly to VisionReward~\cite{xu2024visionreward}, we report two accuracy metrics: ties-included accuracy~\cite{deutsch2023ties} and ties-excluded accuracy. For ties-excluded accuracy, we exclude all data labeled as \textit{"ties"} and use only data labeled as \textit{"A wins"} or \textit{"B wins"} for calculation. Since all competitors predict scores based on pointwise samples, we compute the rewards for each pair, convert the relative reward relationships into binary labels, and calculate classification accuracy.
For ties-included accuracy, we adopt the tie calibration algorithm proposed in Algorithm 1 by \citet{deutsch2023ties}. This method traverses all possible tie thresholds, calculates three-class accuracy for each threshold, and selects the highest accuracy as the final metric.
\section{Hyperparameters}\label{app:sec:hyperparameters}
In all alignment experiments, we applied LoRA to fine-tune the transformer models' linear layers, as our findings indicate that full parameter fine-tuning can degrade the model's performance or potentially lead to model collapse. 

\begin{table}[h]%
\setlength\tabcolsep{15pt}
\caption{Hyperparameters for alignment algorithms}%
\small
    \centering%
    % \resizebox{1.0\textwidth}{!}{%
        \begin{tabular}{rl}%
            \toprule
            \multicolumn{2}{c}{\textbf{Algorithm-agnostic hyperparameters for SFT, Flow-RWR, Flow-DPO}}   \\
\midrule            
            Training strategy    & LoRA \citep{hu2021lora}       \\
            LoRA alpha           & 128                        \\
            LoRA dropout         & 0.0                      \\
            LoRA R               & 64                       \\
            LoRA target-modules  & q\_proj,k\_proj,v\_proj,o\_proj       \\
            Optimizer            & Adam \citep{kingma2014adam}  \\
            Learning rate        & 5e-6                     \\
            Epochs      & 1                           \\
            Batch size           & 64                 \\
            GPUs   &   16 NVIDIA A800@80G   \\
            \midrule
            \multicolumn{2}{c}{\textbf{Flow-DPO}}         \\
            \midrule            
            $\beta$                   & 500        \\
            \bottomrule
        \end{tabular}
        
      % }
    \label{tab:key_imple_detail}
\end{table}

\begin{table}[h]%
\setlength\tabcolsep{15pt}
\caption{Hyperparameters for reward modeling.}%
\small
    \centering%
    % \resizebox{1.0\textwidth}{!}{%
        \begin{tabular}{rl}%
            \toprule
            \multicolumn{2}{c}{\textbf{VLM}}   \\
\midrule            
            Training strategy    & \makecell[l]{Full training for vision encoder\\ LoRA for language model} \\       
            LoRA alpha           & 128 \\
            LoRA dropout         & 0.0 \\
            LoRA R               & 64 \\
            LoRA target-modules  & Linear layers in language model \\
            Optimizer            & Adam \citep{kingma2014adam}  \\
            Learning rate        & 2e-6                     \\
            Epochs      & 2                           \\
            Batch size           & 32                  \\
            GPUs   &   8 NVIDIA A800@80G   \\
            $\theta$ in Eq.~\ref{eq:btt_loss} & 5.0 \\
            \midrule
            \multicolumn{2}{c}{\textbf{VDM}}   \\
\midrule            
            Training strategy    & Full training      \\
            Optimizer            & Adam \citep{kingma2014adam}  \\
            Learning rate        & 5e-6                     \\
            Epochs      & 2                           \\
            Batch size           & 144                        \\
            Reward Dimension     & 3                    \\
            GPUs   &   8 NVIDIA A800@80G   \\
            \bottomrule
        \end{tabular}        
      % }
    \label{tab:key_imple_detail_rm}
\end{table}
\section{Additional Qualitative Results}
\label{app:sec:additional_visual}

We present additional qualitative results generated by both the original model and the Flow-DPO aligned model, as shown in Fig.\ref{fig:result_supp_1} and Fig.\ref{fig:result_supp_2}.

\begin{figure*}[p]
    \centering
    \includegraphics[width=0.98\linewidth]{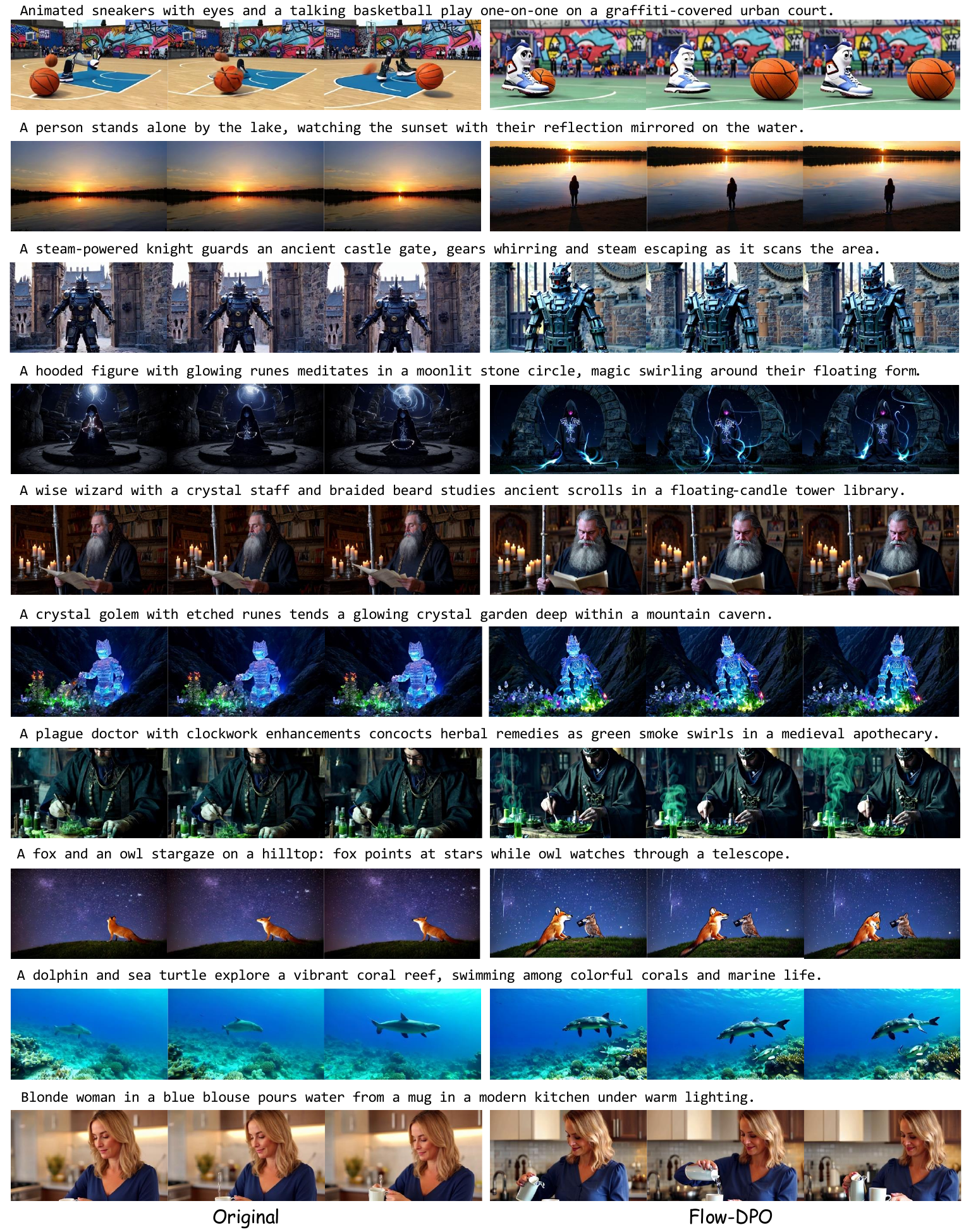}
    \caption{Additional visual comparison of videos generated by the original model and the Flow-DPO aligned model.}
    \label{fig:result_supp_1}
\end{figure*}

\begin{figure*}[p]
    \centering
    \includegraphics[width=0.98\linewidth]{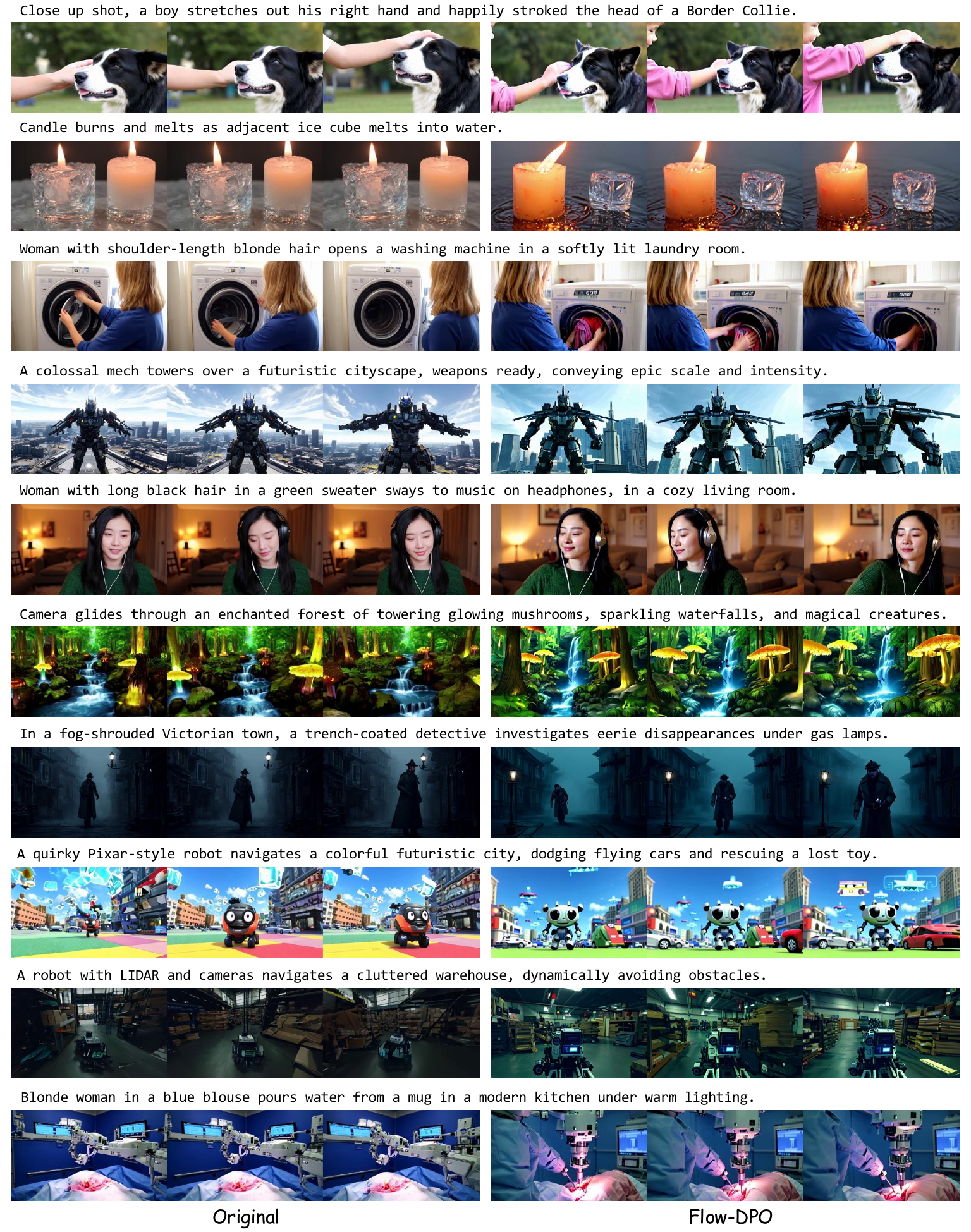}
    \caption{Additional visual comparison of videos generated by the original model and the Flow-DPO aligned model.}
    \label{fig:result_supp_2}
\end{figure*}
\section{Input Template for Reward Model}
\label{sec:appendix_input_template}

\begin{tcolorbox}[
  colback=black!5!white,
  % colframe=colframe=black!75!black,
  title=Full Input Template,
  % arc=4pt,
  % boxsep=5pt,
  % left=5pt,
  % right=5pt,
  % top=5pt,
  % bottom=5pt
]
\tboxsize
\textbf{\texttt{[VIDEO]}}
You are tasked with evaluating a generated video based on three distinct criteria: Visual Quality, Motion Quality, and Text Alignment. Please provide a rating from 0 to 10 for each of the three categories, with 0 being the worst and 10 being the best. Each evaluation should be independent of the others. \\

**Visual Quality:**  \\
Evaluate the overall visual quality of the video, with a focus on static factors. The following sub-dimensions should be considered: \\
- **Reasonableness:** The video should not contain any significant biological or logical errors, such as abnormal body structures or nonsensical environmental setups. \\
- **Clarity:** Evaluate the sharpness and visibility of the video. The image should be clear and easy to interpret, with no blurring or indistinct areas. \\
- **Detail Richness:** Consider the level of detail in textures, materials, lighting, and other visual elements (e.g., hair, clothing, shadows). \\
- **Aesthetic and Creativity:** Assess the artistic aspects of the video, including the color scheme, composition, atmosphere, depth of field, and the overall creative appeal. The scene should convey a sense of harmony and balance. \\
- **Safety:** The video should not contain harmful or inappropriate content, such as political, violent, or adult material. If such content is present, the image quality and satisfaction score should be the lowest possible. \\
\\
Please provide the ratings of Visual Quality: \textcolor{blue!80!black}{\textbf{\texttt{<|VQ\_reward|>}}} \\
END \\
\\
**Motion Quality:** \\
Assess the dynamic aspects of the video, with a focus on dynamic factors. Consider the following sub-dimensions: \\
- **Stability:** Evaluate the continuity and stability between frames. There should be no sudden, unnatural jumps, and the video should maintain stable attributes (e.g., no fluctuating colors, textures, or missing body parts). \\
- **Naturalness:** The movement should align with physical laws and be realistic. For example, clothing should flow naturally with motion, and facial expressions should change appropriately (e.g., blinking, mouth movements). \\
- **Aesthetic Quality:** The movement should be smooth and fluid. The transitions between different motions or camera angles should be seamless, and the overall dynamic feel should be visually pleasing. \\
- **Fusion:** Ensure that elements in motion (e.g., edges of the subject, hair, clothing) blend naturally with the background, without obvious artifacts or the feeling of cut-and-paste effects. \\
- **Clarity of Motion:** The video should be clear and smooth in motion. Pay attention to any areas where the video might have blurry or unsteady sections that hinder visual continuity. \\
- **Amplitude:** If the video is largely static or has little movement, assign a low score for motion quality. \\
\\
Please provide the ratings of Motion Quality: \textcolor{blue!80!black}{\textbf{\texttt{<|MQ\_reward|>}}} \\
END \\
\\
**Text Alignment:**  \\
Assess how well the video matches the textual prompt across the following sub-dimensions: \\
- **Subject Relevance** Evaluate how accurately the subject(s) in the video (e.g., person, animal, object) align with the textual description. The subject should match the description in terms of number, appearance, and behavior. \\
- **Motion Relevance:** Evaluate if the dynamic actions (e.g., gestures, posture, facial expressions like talking or blinking) align with the described prompt. The motion should match the prompt in terms of type, scale, and direction. \\
- **Environment Relevance:** Assess whether the background and scene fit the prompt. This includes checking if real-world locations or scenes are accurately represented, though some stylistic adaptation is acceptable. \\
- **Style Relevance:** If the prompt specifies a particular artistic or stylistic style, evaluate how well the video adheres to this style. \\
- **Camera Movement Relevance:** Check if the camera movements (e.g., following the subject, focus shifts) are consistent with the expected behavior from the prompt. \\
\\
Textual prompt - \textbf{\texttt{[PROMPT]}} \\
Please provide the ratings of Text Alignment: \textcolor{blue!80!black}{\textbf{\texttt{<|TA\_reward|>}}} \\
END \\

\end{tcolorbox}
\clearpage
\section{Prompt Subset of TA-Hard}
\label{sec:app:ta_hard_prompt}
\begin{tcolorbox}[
  colback=black!5!white,
  % colframe=colframe=black!75!black,
  % arc=4pt,
  % boxsep=5pt,
  % left=5pt,
  % right=5pt,
  % top=5pt,
  % bottom=5pt
]
\tboxsize
A rabbit and a turtle racing on a track. The rabbit is sprinting ahead, while the turtle is steadily moving along. Spectators are cheering from the sidelines, and a finish line is visible in the distance. \\
A lion and a zebra playing soccer on a grassy field. The lion is dribbling the ball, while the zebra is trying to block it. The field is surrounded by trees, and other animals are watching the game. \\
A fox and an owl stargazing together on a hilltop. The fox is lying on its back, pointing at the stars, while the owl is perched on a nearby branch, looking through a telescope. The night sky is clear, with countless stars twinkling. \\
A dolphin and a sea turtle exploring a coral reef. The dolphin is swimming gracefully, while the sea turtle is gliding slowly beside it. The coral reef is vibrant with colorful corals and various marine life. \\
A dolphin and a whale singing together in the ocean. The dolphin is leaping out of the water, while the whale is producing deep, melodic sounds. The ocean is vast and blue, with the sun setting on the horizon. \\
A fox and a rabbit playing a duet on a piano in a forest clearing. The fox is playing the melody, while the rabbit is accompanying with harmony. The forest is alive with the sounds of nature, and other animals are gathered to listen. \\
A squirrel and a chipmunk building a treehouse in a large oak tree. The squirrel is hammering nails, while the chipmunk is holding a blueprint. The tree is tall and sturdy, with branches full of leaves. \\
A robot with glowing blue eyes and a human with a cybernetic arm playing basketball in a futuristic gym. The robot is dribbling the ball with precision, while the human is preparing to block the shot. The gym is equipped with advanced technology and holographic scoreboards. \\
A knight in shining armor and a wizard with a long, flowing beard practicing archery in a medieval courtyard. The knight is aiming at a target with a longbow, while the wizard is using magic to guide the arrows. The courtyard is surrounded by stone walls and blooming flowers. \\
A talking apple with eyes and a mouth, and a singing banana with legs hosting a talent show in a vibrant theater. The apple is the judge, giving feedback to contestants, while the banana is the host, entertaining the audience with jokes and songs. The theater is filled with colorful lights and excited spectators. \\
A pirate with a wooden leg and a mermaid with a shimmering tail playing a duet on a grand piano in an underwater cave. The pirate is playing the melody, while the mermaid is accompanying with harmony. The cave is illuminated by bioluminescent sea creatures, creating a magical atmosphere. \\
A superhero with a cape and a detective with a magnifying glass solving a mystery in a bustling city. The superhero is flying above the streets, scanning for clues, while the detective is examining evidence on the ground. The city is alive with activity, with skyscrapers towering overhead. \\
A chef with a tall hat and a robot with multiple arms cooking a gourmet meal in a state-of-the-art kitchen. The chef is chopping vegetables with precision, while the robot is simultaneously stirring, frying, and baking. The kitchen is equipped with the latest culinary technology, creating a seamless cooking experience. \\
A painter with a beret and a poet with a quill creating art in a sunlit studio. The painter is working on a vibrant canvas, while the poet is writing verses inspired by the artwork. The studio is filled with natural light and creative energy, with art supplies scattered around. \\
A spider with a square face and a green-furred puppy having a playful fight in a whimsical garden. The spider is using its web to swing around, while the puppy is playfully nipping at the spider's legs. The garden is filled with oversized flowers and colorful mushrooms. \\
A talking teapot with a mustache and a dancing teacup with legs performing a tea ceremony in an enchanted forest. The teapot is pouring tea, while the teacup is twirling and dancing around. The forest is magical, with glowing plants and twinkling lights. \\
A robot with a television screen for a head and a toaster with arms and legs having a cooking competition in a retro kitchen. The robot is displaying recipes on its screen, while the toaster is popping out perfectly toasted bread. The kitchen is styled with vintage appliances and checkered floors. \\
A pair of animated scissors with eyes and a mouth and a roll of tape with tiny arms and legs wrapping presents in a festive workshop. The scissors are cutting wrapping paper with precision, while the tape is sealing the packages with a smile. The workshop is decorated with holiday lights and ornaments. \\
A pair of animated sneakers with eyes and a mouth and a talking basketball with a face playing a game of one-on-one on an urban basketball court. The sneakers are dribbling and making quick moves, while the basketball is bouncing and trying to score. The court is surrounded by graffiti-covered walls and cheering spectators. \\
A paper airplane with a scarf and a paper boat with a captain's hat racing in the rain. The airplane glides through the air while the boat sails through puddles. \\
A basketball with a mohawk and a soccer ball with a bandana playing hopscotch in a playground. The basketball bounces high while the soccer ball rolls smoothly. \\
A mechanical knight with steam-powered joints standing guard at an ancient castle gate. Gears whir softly as its head turns to scan the surroundings, while steam occasionally escapes from its armor joints. \\
A wandering alchemist with potion-filled vials clinking on their belt, gathering herbs in an enchanted forest where mushrooms glow and flowers whisper secrets. \\
A mysterious plague doctor with clockwork enhancements peeking through their dark robes, mixing herbal remedies in a medieval apothecary shop as green smoke swirls from bubbling vials.

\end{tcolorbox}

% \input{sections/appendix/impact}
% \input{sections/appendix/generation_samples}

% \clearpage
% \input{sections/checklist}

% \clearpage
% \iftoggle{isSubmission}{
%   \input{sections/check_list}
% }
\end{document}